\newenvironment{theorem-repeat}[1]{%
    \begingroup
    
    \addtocounter{theorem}{-1}%
    \begin{theorem}
}{%
    \end{theorem}
    \endgroup
}
\theoremstyle{plain}
\newtheorem{theorem}{Theorem}
\newtheorem*{theorem*}{Theorem}
\newtheorem{proposition}{Proposition}[section]
\newtheorem{lemma}{Lemma}[section]
\theoremstyle{definition}
\newtheorem{definition}{Definition}[section]
\theoremstyle{remark}
\newtheorem{condition}{Condition}
\icmltitlerunning{Higher-order Structure Boosts Link Prediction on Temporal Graphs}
\begin{document}

\twocolumn[
\icmltitle{Higher-order Structure Boosts Link Prediction on Temporal Graphs}



\icmlsetsymbol{equal}{*}

\begin{icmlauthorlist}
\icmlauthor{Jingzhe Liu}{msu}
\icmlauthor{Zhigang Hua}{comp}
\icmlauthor{Yan Xie}{comp}
\icmlauthor{Bingheng Li}{msu}
\icmlauthor{Harry Shomer}{msu}
\icmlauthor{Yu Song}{msu}
\icmlauthor{Kaveh Hassani}{comp}
\icmlauthor{Jiliang Tang}{msu}
\end{icmlauthorlist}

\icmlaffiliation{msu}{Department of Computer Science and Engineering, Michigan State University}
\icmlaffiliation{comp}{Meta}



\vskip 0.3in
]
\printAffiliationsAndNotice{}



\begin{abstract}
Temporal Graph Neural Networks (TGNNs) have gained growing attention for modeling and predicting structures in temporal graphs. However, existing TGNNs primarily focus on pairwise interactions while overlooking higher-order structures that are integral to link formation and evolution in real-world temporal graphs. 
Meanwhile, these models often suffer from efficiency bottlenecks, further limiting their expressive power. 
To tackle these challenges, we propose a Higher-order structure Temporal Graph Neural Network (HTGN), which incorporates hypergraph representations into temporal graph learning. 
In particular, we develop an algorithm to identify the underlying higher-order structures, enhancing the model’s ability to capture the group interactions. 
Furthermore, by aggregating multiple edge features into hyperedge representations, HTGN effectively reduces memory cost during training. 
We theoretically demonstrate the enhanced expressiveness of our approach and validate its effectiveness and efficiency through extensive experiments on various real-world temporal graphs. 
Experimental results show that HTGN achieves superior performance on dynamic link prediction while reducing memory costs by up to 50\% compared to existing methods. 
\end{abstract}

\section{Introduction}

Graphs have been widely used to model interactions in complex real-world systems. 
With the rapid progress of deep learning, graph neural networks (GNNs) have been developed to predict properties at graph, node, or edge level~\cite{ma2021deep}. 
Recently, there has been increasing attention to graphs that evolve over time, such as transportation, transaction, or social networks, leading to the study of temporal graphs (TGs)~\cite{holme2012temporal, masuda2016guide}. 
Unlike static graphs, TGs change dynamically and pose new challenges for graph learning. 
A central task in this domain is dynamic link prediction, which predicts whether two nodes will be connected at a specific time~\cite{poursafaei2022towards}. 
As a result,  efforts have been made to develop temporal graph neural networks (TGNNs) for temporal link prediction~\cite{cai2024survey}. 
These models show a strong capacity for capturing dynamic patterns in TGs, achieving promising performance.

Despite their success, TGNNs still exhibit inherent limitations that may impede their ability to capture the underlying structures of temporal graphs and constrain their effectiveness in dynamic link prediction. 
One key drawback is that most existing TGNNs focus predominantly on pairwise interactions, overlooking higher-order interactions and structures. 
Nodes in real-world evolving networks often interact within groups rather than solely through one-to-one connections~\cite{petri2018simplicial, battiston2021physics}. For example, a communication event may involve a team of individuals, and a transaction could include multiple stakeholders. 
It is evident that such group interactions -- formally referred to as higher-order structures -- are integral to the dynamic behavior and link formation of temporal graphs~\cite{benson2018simplicial}. 
However, how to effectively capture these higher-order structures in TGNNs remains underexplored.



Another critical challenge in current TGNN research is the limited expressive power~\cite{xu2018powerful} constrained by efficiency bottlenecks.
Existing studies have shown that a TGNN’s expressiveness is crucial for dynamic link prediction~\cite{souza2022provably}. 
One straightforward way to improve expressiveness is to expand the model’s receptive field to include multi-hop neighborhood information~\cite{xu2020inductive, wang2022inductiverepresentationlearningtemporal}. 
However, edges can recur multiple times between the same nodes in temporal graphs~\cite{poursafaei2022towards}, therefore, this expansion often leads to an exponential increase in training edges~\cite{luo2022neighborhood}, incurring high memory and computational costs.
There are methods~\cite{rossi2020temporal} that introduce a memory module to keep track of evolving features of each node, correspondingly improving the expressive power~\cite{souza2022provably}.
However, these modules require substantial resources in large-scale networks, and multi-hop aggregation remains necessary to mitigate memory staleness~\cite{rossi2020temporal} before computing output embeddings.
Consequently, overcoming these efficiency constraints while enhancing the model’s expressiveness power remains a significant challenge.


Based on these challenges, a natural question arises: {\it Can we design a TGNN that is both expressive and scalable?} To address this, we propose a novel \textit{Higher-order structure Temporal Graph neural Network}~(HTGN). 
To effectively leverage higher-order information, we propose an algorithm that identifies higher-order structures and constructs hyperedges to represent them.
Next, HTGN integrates temporal graph learning with hypergraph learning, aggregating multiple edge features into a single hyperedge feature, thereby reducing memory costs.
As a result, the hyperedge representations enable the HTGN to incorporate multiple hops of neighborhood information, enhancing the model's expressiveness.
Our main contributions can be summarized as follows:
\vspace{-2mm}
\begin{itemize}
    \item  We highlight the importance of higher-order structures in temporal graph learning.
To effectively capture and represent these structures, we formulate the problem of hyperedge construction in temporal graphs and propose an algorithm with a theoretical proof of its validity.

    \item We propose a novel \textit{Higher-order structure Temporal Graph neural Network}~(HTGN) that integrates temporal graph learning with hypergraph learning to better capture higher-order structures for temporal link prediction. Moreover, we theoretically demonstrate that HTGN exhibits greater expressiveness compared to baseline methods.
    \item We conduct extensive experiments on temporal graphs of various sizes and domains. The results demonstrate that HTGN can outperform baseline methods while also reducing memory costs by up to 50\%.
\end{itemize}
\vspace{-3mm}



\section{Related Works}

\textbf{Temporal Graph Neural Networks.}
Temporal graph learning has gained significant attention in recent years, with various kinds of temporal graph neural networks emerging to model dynamic relationships effectively. These methods can be roughly classified into two types: memory-based and subgraph-based. \textit{Memory-based methods} utilize a memory module to record the evolving dynamics. \citet{kumar2019predicting} introduced JODIE, a coupled recurrent neural network designed to learn the embedding trajectories of users and items. Similarly, DyRep~\cite{trivedi2019dyrep} focuses on generating low-dimensional node embeddings to encode communication patterns and associations. \citet{rossi2020temporal} proposed TGN, a temporal graph neural network that incorporates a memory module to maintain and update node representations. Recently, \citet{zhang2024efficient} integrated the Neural Common Neighbor with TGN, further improving the performance.
\textit{Subgraph-based methods} focus on utilizing temporal neighborhood structures in temporal graph learning. CAWN~\cite{wang2022inductiverepresentationlearningtemporal} leverages random anonymous walks to model the evolving neighborhood dynamics, while TCL~\cite{wang2021tcl} constructs a temporal dependency interaction graph to capture sequential cascades of interactions. TGAT~\cite{xu2020inductive} introduces a temporal graph attention mechanism to process time-aware neighborhood features. Additionally, NAT~\cite{luo2022neighborhood} employs a multi-hop neighboring node dictionary to extract joint neighborhood features. GraphMixer~\cite{cong2023do} and DyGFormer~\cite{yu2023towards} generate node features by aggregating one-hop neighborhood embeddings and input them to a transformer-based decoder~\cite{vaswani2017attention} to produce final predictions.

\textbf{Hypergraph Construction.} 
The problem of hypergraph (re)construction aims at identifying higher-order structures within graphs. Once these structures are identified, they are represented as hyperedges, allowing for the reconstruction of the entire network as a hypergraph.
Early studies~\cite{petri2013networks,petri2013topological,patania2017topological} have approached this problem by treating it as a maximal clique discovery task. However, \citet{young2021hypergraph} argue that not all cliques in a network are meaningful in real-world contexts. If all maximal cliques are indiscriminately considered higher-order structures, a substantial number of erroneous hyperedges will be introduced. To address this issue, they propose a Bayesian method to distinguish meaningful hyperedges from other maximal cliques. Furthermore, \citet{wang2024graphs} present a deep learning-based approach to identify hyperedges when a set of positive examples is available.

\section{Preliminaries}

In this section, we provide the basic notations used throughout the paper and define the problem formulation. Following existing works~\cite{rossi2020temporal, poursafaei2022towards, huang2023temporal}, we view the temporal graphs as a stream of events, where each event is an interaction between two nodes with a timestamp.

\begin{definition}[\textbf{Temporal Graphs}] 
\textit{There is a set of nodes $\mathcal{V}=\{n_1, n_2, ...\}$. A temporal graph $\mathcal{G}$ on $\mathcal{V}$ can be represented as a stream of events of node interactions $\{(u_1, v_1, t_1), (u_2, v_2, t_2), \cdots, (u_n, v_n, t_n)\}$, $t_1 \le t_2 \le \cdots \le t_n$, where $u_i, v_i \in \mathcal{V}$ are nodes that interact and $t_i$ denotes the timestamp of the $i$th link. We denote the set of dynamic links as $\mathcal{E}$.}
 \end{definition}
\vspace{-1mm}

A vital task for temporal graph learning is to predict how the graph evolves over time, which leads to the dynamic link prediction problem.

\begin{definition} [\textbf{Dynamic Link Prediction}] \textit{Given all historical events before time $t^*$, i.e., $\{(u, v, t)\in\mathcal{E} \: | \: t<t^*\}$, the dynamic link prediction problem is to predict whether there will be a link between two specific nodes $u^*$ and $v^*$ at time $t^*$.}
\end{definition}

To capture the higher-order structures of graphs, we propose to leverage hypergraphs and hyperedges in this work.

\begin{definition}
    [\textbf{Hypergraph}] 
    \textit{A hypergraph can be represented by a tuple $(\mathcal{V}, \mathcal{H})$. $\mathcal{V}=\{n_1, n_2, ...\}$ is the set of nodes. And the set of hyperedges is $\mathcal{H}=\{E_i \: | \: E_i\subseteq \mathcal{V}; \: 1\leq i \leq m\}$, where $E_i$ is hyperedge and $m$ is the number of hyperedges.}
\end{definition}

Furthermore, in homogeneous graphs, the construction of hyperedges usually relies on the enumeration of maximal cliques~\cite{wang2024graphs}. 

\begin{definition}[\textbf{Maximal Cliques}]\label{def:clique}
\textit{
    A clique is a fully connected subgraph. A maximal clique is a clique that is not the subgraph of any other clique.
    }
\end{definition}

In the next section, we will introduce our method for modeling higher-order structures in temporal graph neural networks ({\bf HTGN}). 
The method consists of two major components: (1) HTGN identifies the higher-order structures within the temporal graphs and represents them with hyperedges; and (2) HTGN learns and predicts dynamic links based on the higher-order structural information of temporal graphs. 



\section{Hyperedge Construction from Temporal Graphs}\label{subsec:hyperedge construction}

Many previous works have highlighted the importance of higher-order structures for analyzing networks~\cite{benson2016higher,benson2018simplicial, zitnik2018modeling}. Usually, the higher-order structures are represented by hyperedges, indicating that interactions occur among more than two nodes. Hence, one key challenge for our method is how to identify and construct the hyperedges \textit{efficiently} and \textit{accurately}.

The following discusses our proposed hyperedge construction method for the two different types of temporal graphs, i.e., homogeneous temporal graphs and heterogeneous temporal graphs. We formally define the two types of networks as following: 
\begin{definition}[Homogeneous temporal graphs]
\textit{ In homogeneous temporal graphs, all nodes belong to the same type, and interactions can occur between any two nodes. }
\end{definition}

\begin{definition}[Heterogeneous temporal graphs] 
\textit{In heterogeneous temporal graphs, nodes can belong to different types, with interactions occurring exclusively between nodes of two distinct types. In this study, we focus on heterogeneous graphs with exactly two node types. \textbf{Hence, all the heterogeneous graphs in this work are bipartite.}}
    
\end{definition}



\subsection{Homogeneous Graphs} 
For homogeneous graphs, a hyperedge contains an arbitrary number of nodes that are interconnected. 
Therefore, it is natural to construct hyperedges on the basis of maximal cliques in Def \ref{def:clique}.
Nonetheless, as pointed out by \citet{wang2024graphs}, not every maximal clique in graphs represents a real higher-order structure that can be viewed as a hyperedge~(an example to illustrate this is provided in Appendix~\ref{app:case}).
In general cases, \textit{if no additional information is provided}, directly transforming every maximal clique to a hyperedge will introduce a large amount of errors~\cite{wang2024graphs}.


Fortunately, the temporal behaviors of links in temporal graphs can serve as useful information and help to handle this challenge. It is evident that \textit{nodes within a hyperedge are more likely to interact collectively within a short time slot}~\cite{benson2018simplicial}.
This indicates that if we accumulate the dynamic links in a time interval to form a snapshot, the maximal cliques within the snapshot are likely to be real hyperedges.
Next, we will formally illustrate this intuition. To achieve this goal, we first extend the hypergraph stochastic block model~\cite{pister2024stochastic} to temporal graphs and propose the hypergraph temporal stochastic block model~(HT-SBM).

\begin{definition}[Hypergraph temporal stochastic block model]
Let \( V = \{1, 2, \dots, n\} \) be the set of nodes, divided into \( K \) communities \( \{C_1, C_2, \dots, C_K\} \), where \( \bigcup_{a=1}^K C_a = V \). 
Let \( \Lambda_t \in \mathbb{R}^{K \times K} \) denote the community connection probability matrix at time \( t \), where \( \Lambda_t[a, b] \) represents the connection probability between communities \( C_a \) and \( C_b \). The time evolution of connection probabilities is given by:
\[
\Lambda_t[a, b] = \Lambda_0[a, b] \cdot \pi(t),
\]
where \( \pi(t) \) is the probability density function with respect to $t$ function, which means evolving with time. 
A hyperedge \( E \subseteq V \) of size \( k \) is said to \emph{exist} at time \( t \) if every pair of nodes \( i, j \in E \) is connected:
\[
\Pr(E \text{ exists at time } t) = \prod_{i, j \in E, i < j} \Pr((i, j) \in A_t),
\]
where \( A_t \) is the adjacency matrix of the network snapshot at time \( t \).
\end{definition}


Using HT-SBM, we now theoretically establish the relationship between the accuracy of the maximal clique algorithm in constructing hyperedges on a snapshot of a temporal graph and the duration of the snapshot. 

\begin{theorem}\label{thm:main}
Given an HT-SBM model of $n$ nodes,
let \( P \) be an exponential distribution with rate parameter \( \lambda \), and let \( Q \) be the probability distribution of $\Lambda_t$. 
Consider a snapshot of the HT-SBM model within duration $t$. Denote by \( Acc(\mathcal{H}) \) the accuracy of the maximal clique algorithm for constructing hyperedges within the snapshot. Then, the expected accuracy satisfies:
\begin{equation}
\begin{aligned}
    \min_{\mathcal{H}} \mathbb{E}_{\Lambda_t \sim Q} \big[ Acc(\mathcal{H}) \big] 
    \leq  D_{\text{KL}}(Q \| P) -\\  \textit{Const} \cdot \frac{n(n-1)}{2} \cdot (1 - e^{-\lambda t})^{2K}
\end{aligned}
\end{equation}
where \textit{Const} is a positive constant scalar. and $D_{\text{KL}}$ is the KL divergence.
\end{theorem}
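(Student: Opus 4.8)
The plan is to translate the combinatorial accuracy of the maximal-clique algorithm into a probabilistic functional of the snapshot, and then to control the gap between the true connectivity law $Q$ and the idealized exponential model $P$ through a change-of-measure argument. The starting observation is that, under the exponential waiting-time assumption, the probability that a given pair $(i,j)$ has produced at least one interaction within a snapshot of duration $t$ is exactly the exponential CDF $1 - e^{-\lambda t}$. First I would make this precise: conditioned on $\Lambda_t$, each potential edge is an independent Bernoulli event whose success probability is governed by $\Lambda_0[a,b]\cdot\pi(t)$ for the appropriate community pair $(a,b)$, and integrating the waiting time against the exponential law converts the density $\pi(t)$ into the factor $1-e^{-\lambda t}$.

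Next I would express the probability that a candidate node set forms a (maximal) clique in the snapshot. By the HT-SBM definition, $\Pr(E\text{ exists at }t)=\prod_{i<j\in E}\Pr((i,j)\in A_t)$, so this probability factorizes over the pairs of $E$, and grouping the pairs according to which of the $K$ community blocks they fall in lets me write the clique-formation probability as a product of block-wise terms, each of the form $(1-e^{-\lambda t})^{(\cdot)}$. A real hyperedge is recovered precisely when all of its internal edges have appeared, so its detection probability increases monotonically toward $1$ with $t$; collecting the minimal block-wise contributions across the $K$ communities is what yields the exponent $2K$. I would then decompose $Acc(\mathcal{H})$ into the rate at which genuine hyperedges are recovered and the rate at which spurious maximal cliques are wrongly reported, and aggregate these contributions over the $\binom{n}{2}=\tfrac{n(n-1)}{2}$ node pairs the algorithm must adjudicate, which is where that combinatorial prefactor enters.

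The final ingredient is the transfer from $P$ to the true law $Q$. Here I would invoke the Donsker--Varadhan / Gibbs variational inequality, which bounds $\mathbb{E}_{\Lambda_t\sim Q}[\,f\,]$ by $D_{\mathrm{KL}}(Q\,\|\,P)$ plus a term evaluated entirely under the exponential model $P$. Choosing $f$ to be the appropriately normalized accuracy functional lets me evaluate the $P$-term in closed form using the edge- and clique-probabilities from the previous step, producing the subtracted quantity $\textit{Const}\cdot\tfrac{n(n-1)}{2}\cdot(1-e^{-\lambda t})^{2K}$. Taking the minimum over admissible hyperedge sets $\mathcal{H}$ on the left and combining with the variational bound then gives the claimed inequality.

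I expect the main obstacle to be the change-of-measure step: making the accuracy functional land with the correct (negative) sign requires care in how the $P$-expectation is normalized, and one must verify that the block-wise clique-formation probabilities aggregate to exactly $(1-e^{-\lambda t})^{2K}$ rather than some looser power of $K$. A secondary difficulty is formalizing $Acc(\mathcal{H})$ so that it is simultaneously a faithful measure of the algorithm's output and a sufficiently well-behaved function of $\Lambda_t$ for the variational inequality to apply.
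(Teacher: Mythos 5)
Your proposal correctly identifies two of the three ingredients the paper uses: the change-of-measure inequality (the paper invokes Lemma~17 of \citet{germain2015risk}, i.e.\ exactly the Donsker--Varadhan/Gibbs bound $\mathbb{E}_{h\sim Q}[\psi(h)] \leq D_{\mathrm{KL}}(Q\|P) + \ln \mathbb{E}_{h\sim P}[e^{\psi(h)}]$ you describe), and the observation that under the exponential law the pairwise connection probabilities integrate to factors of $1-e^{-\lambda t}$. However, there is a genuine gap at the start of your argument: the paper's proof does not decompose $Acc(\mathcal{H})$ into true-positive and false-positive rates over the $\binom{n}{2}$ pairs. Its linchpin is a purely combinatorial result (Theorem~3 of \citet{wang2024graphs}) stating that for any hypergraph satisfying the no-nested-hyperedge condition, the worst-case accuracy of maximal-clique reconstruction is at most $2^{-(\lceil m/2\rceil - 1)} \ll 2^{-m}$, where $m$ is the number of hyperedges. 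It is this exponential-in-$m$ bound that supplies the functional $\psi$ fed into the variational inequality, yielding the term $-\mathbb{E}_{\Lambda_t\sim P}[\,m\cdot\ln(e/2)\,]$; the expected number of hyperedges $m$ is then expanded via the HT-SBM as a double sum over the $K\times K$ community blocks times $\tfrac{n(n-1)}{2}$, which is where both the combinatorial prefactor and the exponent $2K$ actually come from. Without some substitute for this accuracy-versus-$m$ bound, your proposed decomposition gives no control over $\min_{\mathcal{H}} Acc(\mathcal{H})$ and no quantity on which to apply the change of measure, so the argument cannot be completed as sketched.

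A secondary misattribution worth flagging: you place $\tfrac{n(n-1)}{2}$ as the number of pairs the algorithm ``adjudicates'' in an accuracy computation, and derive $2K$ from ``minimal block-wise contributions.'' In the paper both factors arise inside the evaluation of $\mathbb{E}_{\Lambda_t\sim P}[m]$ --- the count of node pairs enters the expected hyperedge count, and $2K$ comes from the double sum $\sum_{e=1}^{K}\sum_{f=1}^{K}$ over community pairs combined with the $\pi(t)$ factors under $P$. Your own stated worry about ``making the accuracy functional land with the correct sign'' is well founded, but the resolution is not a normalization trick; it is the missing combinatorial lemma.
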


The detailed proof can be found in Appendix~\ref{app:proof1}.
Theorem~\ref{thm:main} suggests that the longer the snapshot duration $t$ is, the lower the hyperedge construction accuracy would be. This fits with the empirical observations found by~\citet{benson2018simplicial}.
Therefore, to accurately construct hyperedges, the maximal clique algorithm {\it should be applied to a snapshot of limited duration}. However, the duration must not be too short, as hyperedges require sufficient time to form.
To address this, we propose an algorithm for hyperedge construction, which applies a hyperparameter to control the size of each snapshot.
The algorithm is described in Algorithm~\ref{alg:example}. 

\begin{algorithm}[tb]
   \caption{Hyperedge Construction}
   \label{alg:example}
\begin{algorithmic}[1]

   \STATE {\bfseries Input:} A event-stream source will generate a link $(u,v)$ at every time step $t$.
   \STATE {\bfseries Initialize:} A set $\mathcal{H}$ to record all the hyperedges constructed. An empty graph snapshot $\mathcal{S}$.
   \REPEAT
   \STATE Receive a new link $(u, v)$
   \IF{$(u, v)$ is already in a hyperedge in $S$} 
   \STATE \textit{Continue}
   \ELSE 
   \STATE $\mathcal{H}$.add($(\{u,v\}$)
   \ENDIF
   \STATE $\mathcal{S}$.add\_edge($u, v$)
   \IF{$\mathcal{S}$ has more than $b$ edges}
   \STATE Find the set $\mathcal{C}$ of all the maximal cliques with more than 2 nodes in $S$
   \FOR{$c$ in $\mathcal{C}$}
   \STATE $\mathcal{H}$.add($c$)
   \FOR{$E_i  \in \{E\subset c |  E \in \mathcal{H}\}$}
   \STATE $\mathcal{H}$.delete($E_i$)
   \ENDFOR
   \ENDFOR
   \STATE Reset $\mathcal{S}$ to be empty.
   \ENDIF
   \UNTIL{No new link is generated}
\end{algorithmic}
\end{algorithm}


\begin{figure}[ht]
\vskip 0.2in
\begin{center}
\centerline{\includegraphics[width=0.7\columnwidth]{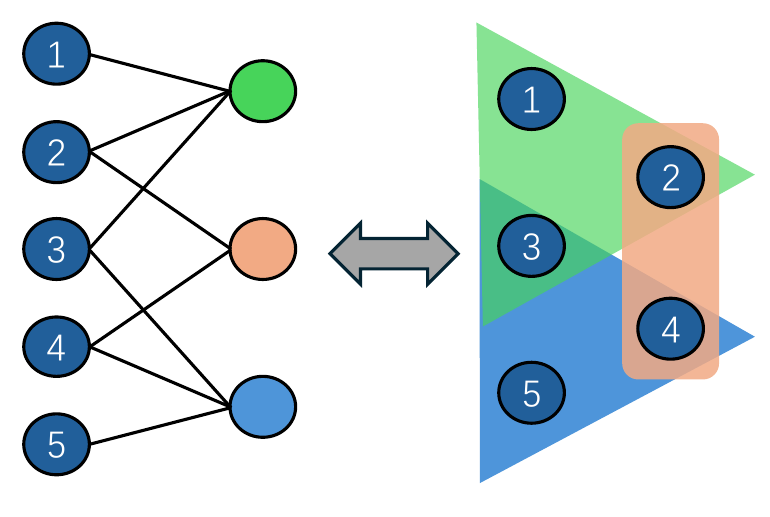}}
\caption{The transformation between heterogeneous bipartite graphs and hypergraphs.}
\label{fig:bipartite}
\end{center}
\vskip -0.2in
\end{figure}

\subsection{Heterogeneous~(Bipartite) Graphs}  
For heterogeneous bipartite graphs, we adopt the approach from~\cite{berge1984hypergraphs} to construct hypergraphs. 
Unlike homogeneous graphs for which we need to recognize the latent hyperedges, bipartite graphs are treated as \textit{explicit representations} of higher-order structures~\cite{kirkley2024inference}. 
Specifically, nodes from one partition connected to the same node in the other partition form a hyperedge as shown in Figure~\ref{fig:bipartite}.
We extend this idea to the bipartite temporal graphs.
To account for the dynamic structural changes in temporal graphs, we introduce a time threshold $t'$ for hyperedge construction. For example, in a user-item e-commerce network, users who access the same item within a recent time interval $t'$ are grouped into the same hyperedge. 
Through this method, one partition of nodes is transformed into hyperedges that connect the other partition. 
The detailed algorithm is in Appendix~\ref{app:bipartite}.



\begin{figure*}[ht]
\vskip 0.2in
\begin{center}
\centerline{\includegraphics[width=2\columnwidth]{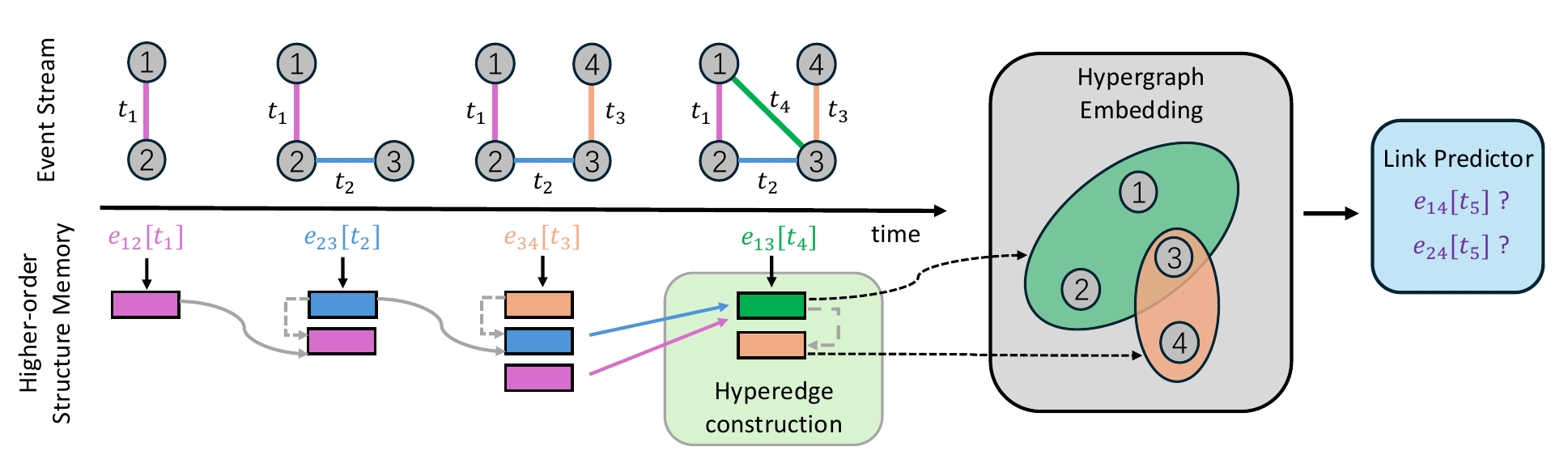}}

\caption{The pipeline of HTGN. \textit{Left:} the higher-order structure memory module. It will be dynamically updated as the stream of interaction events is input into the model~(a homogeneous temporal graph in this example). \textit{Middle:} the hypergraph embedding module. It will compute the node embeddings with the hyperedge features. \textit{Right:} the link predictor. It will predict whether two nodes will form a link at the given time according to their node embeddings.}
\label{fig:pipeline}
\end{center}
\vskip -0.2in
\end{figure*}

\section{Higher-order structure Temporal Graph Network~(HTGN)}

In this subsection, we present the design of our proposed higher-order structural temporal graph network.  An overview of HTGN is shown in Figure~\ref{fig:pipeline}. It consists of two major components: {\bf (1)} a higher-order structure memory module and {\bf (2)} a hypergraph embedding module. We will introduce these two components in detail respectively.

\subsection{Higher-order Structure Memory Module} 
To track the evolving features of each node, the traditional memory module allocates a specific memory unit to each node in the temporal graph~\cite{rossi2020temporal, zhang2024efficient}. However, this approach becomes impractical when dealing with large-scale temporal graphs due to excessive memory consumption. Moreover, relying solely on node features may fail to capture essential structural information for link prediction. 
To address these challenges, we propose a dynamic memory allocation strategy that assigns memory units to higher-order structures rather than statically allocating a fixed number of units per node. 
This approach effectively reduces memory costs while enabling HTGN to track crucial higher-order structures within the network.

In the following, we describe the initialization and update process of the HTGN memory module as the graph structure evolves.
For homogeneous graphs, HTGN seamlessly integrates Algorithm~\ref{alg:example} into its workflow, dynamically updating the hyperedges as new links are introduced chronologically.
For heterogeneous graphs, while the number of hyperedges remains fixed at the beginning, the nodes within each hyperedge dynamically change over time.

\textbf{Memory Initialization.} 
Before the training starts, HTGN initializes $z$ memory units, where each memory unit keeps track of a hyperedge feature. We denote the set of memory features as $\mathcal{M}$, then we get
\begin{equation}
    \mathcal{M} = \{\mathbf{m_1}, \mathbf{m_2}, ... , \mathbf{m_z} \}
\end{equation}

For homogeneous graphs, $z$ is smaller than the total number of nodes in the graph. We use the set $\mathcal{H}$ to record the hyperedges constructed.
When there is an interaction event with feature $\mathbf{e}_{ij}(t)$ between node $n_i$ and $n_j$ at time $t$, if at least one of node $n_i$ and node $n_j$ does not
belong to any hyperedges in $\mathcal{H}$, we will create a new hyperedge $E^*=\{n_i, n_j\}$ and add $E$ to $\mathcal{H}$~(Line 5 to 9 in Algorithm~\ref{alg:example}). We assign an empty memory unit $\mathbf{m_i}$ to store the link feature. For simplicity, we suppose that the memory features can be indexed by hyperedges, such that $\mathbf{m}[E^*] = \mathbf{e}_{ij}(t)$.

For heterogeneous graphs, $z$ equals to the number of hyperedges fixed in Section~\ref{subsec:hyperedge construction}. Hence, we have $\mathbf{m}_i = \mathbf{m}[E_i]$, and each memory unit would keep track of the evolving features of a hyperedge, respectively.

\textbf{Higher-order Structure Update.} As the structure of temporal graphs evolving over time, the model would also keep updating the higher-order structure during the training and inferring.
For simplicity of notation, we define two maps at the beginning of our discussion.

\begin{definition}
    The map $H[n]$ will return the set of all hyperedges that a node $n$ belongs to, i.e., $H[n] = \{E| n \in E\}$.
\end{definition}

\begin{definition}
    The map $t[n, E]$ will return the most recent time that node $n$ has an interaction in the hyperedge $E$. And we note that $t[E] = \max(t[\cdot, E])$.
\end{definition}

Next, we will discuss the higher-order structure update for homogeneous and heterogeneous graphs, respectively.

For homogeneous graphs, we follow Algorithm~\ref{alg:example}. 
Every time that a batch of $b$ interactions occur, we would enumerate the maximal cliques $\mathcal{C}$ with the snapshot formed by the $b$ links.
Suppose that the higher-order structure recorded by the model is $\mathcal{S}$.
For a new maximal clique $c$ containing more than 2 nodes recognized in the snapshot, we will set it as a new hyperedge. 
The hyperedges to be aggregated into $c$ can be noted by $\mathcal{D}_c \coloneqq \{E_i\subset c \: | \: E_i\in \mathcal{H}\}$~(Line 11 to 19 in Algorithm~\ref{alg:example}). 
Then, the model aggregates the features of hyperedges in $\mathcal{D}_c$ into a single memory feature $\mathbf{m}_c$.
To mitigate the information loss of aggregation, we follow the previous works~\cite{souza2022provably} to employ an injective function for the aggregation operation. Specifically, we multiply an exponential term to each memory feature, which assigns smaller weights to older features. The new memory feature is calculated by:
\begin{equation}
    \mathbf{m}[c] = \sum_{E \in \mathcal{D}_c} \text{MLP}(\mathbf{m}[E] || )\alpha^{-\beta (t^*-t[E])}
\end{equation}

where $t^*$ is the current time, and $\alpha$, $\beta$ are scaler hyperparameters. After the calculation, we delete all hyperedges in $\mathcal{D}_c$ from $\mathcal{H}$ and set their memory units to be empty.

For heterogeneous graphs, since the number of hyperedges is fixed, we only change the values of $H(n)$. Each node will only be in a hyperedge if there is an interaction between them during a recent time interval $t'$. If a hyperedge $E$ fulfills $t^*-t[n,E] > t'$, it will be removed from the $H[n]$.

\textbf{Memory Feature Update.}
Besides the memory initialization and structure update, the hyperedge feature will also be updated whenever there is an interaction.
Suppose the interaction happens between node $u$ and node $v$ at time $t$ with link feature $e_{uv}(t)$, the message will be passed between the hyperedges which $u$ and $v$ are in.
The model first randomly selects hyperedges $E_u$ and $E_v$ from $H[u]$ and $H[v]$ respectively. Then their features $\mathbf{m}[E_u]$ and $\mathbf{m}[E_v]$ will be updated. Specifically, the updated features $\mathbf{m}'[E_u]$ and $\mathbf{m}'[E_v]$  will be calculated as:
\begin{align}
    &\mathbf{f}_u = \text{msg}_s(\mathbf{m}[E_u], \mathbf{m}[E_v], \Delta t, e_{uv}(t), |E_u|) \\
    &\mathbf{f}_v = \text{msg}_d(\mathbf{m}[E_u], \mathbf{m}[E_v], \Delta t, e_{uv}(t), |E_v|)\\
    &\mathbf{m}'[E_u] = \text{mem}(\mathbf{f}_u, \mathbf{m}[E_u])\\
    &\mathbf{m}'[E_v] = \text{mem}(\mathbf{f}_v, \mathbf{m}[E_v])
\end{align}
where $|\cdot|$ is the size of a hyperedge, $\text{msg}_s()$ and $\text{msg}_g()$ are message functions, $\text{mem}()$ is a memory update function, and $\Delta t$ is the time since last interaction between the two nodes. 
In our implemention, we use concatenation of all inputs for $\text{msg}_s$ and $\text{msg}_g$, and GRU~\cite{cho2014learning} for the $\text{mem}()$ function.


\subsection{Hypergraph Embedding} 
Since the memory module records the hyperedge features, the model needs to calculate the node embeddings before predicting future links.
To fully utilize the hyperedge features, we apply hypergraph convolution~\cite{bai2021hypergraph} to compute the node embeddings.
Before presenting the details, we first define the concept of k-hop temporal neighborhood.
\begin{definition}
    For a node $u$, its $k$-hop neighborhood at time $t$ is a set of nodes which has distance $k$ from $u$. We denote the set as $\mathcal{N}^t_k(u) \coloneqq \{v \: | \: SPD^t(u,v)=k \}$, where $SPD^t$ is the shortest path distance by the number of edges before time $t$. 
\end{definition}

For a node $u$, its initial embedding is calculated as: 
\begin{align}
    \mathbf{h}^{(1)}_u(t)&=\mathrm{ReLu}(\sum_{E  \in H[u] } \mathbf{W}_1^{(1)}(\mathbf{m}[E] \,  \|\, \boldsymbol{\phi}(\delta t)))
\end{align}
where $\delta t = t - t[u, E]$ is the change in time and $\boldsymbol{\phi}(\cdot)$ is a time encoding function which is detailed in Appendix~\ref{app:time encoding}. $\mathbf{W}_1^{(1)}$ is a learnable weight matrix.
Then the $l$-th layer embedding of node $u$ is calculated as:
\begin{align}
    &\mathbf{h}^{(l)}_u = \mathbf{W}_2^{(l)}(\mathbf{h}^{(l-1)}_u(t) \, \| \, \tilde{\mathbf{h}}^{(l)}_u(t))\\
    &\tilde{\mathbf{h}}^{(l)}_u = \mathrm{ReLu} (\sum_{j  \in \mathcal{N}^t_k(u) } \mathbf{W}_1^{(l)}(\mathbf{h}^{(l-1)}_j \,  \|\, \boldsymbol{\phi}(\Delta t)))
\end{align}
Here as well, $\boldsymbol{\phi}(\cdot)$ is a time encoding and $\mathbf{W}_1^{(l)}$ and $\mathbf{W}_2^{(1)}$ are both learnable weight matrices.

\textbf{Link prediction.}
To predict whether a link will form at given time $t$, we train a link predictor with the node embeddings as input. 
For homogeneous graphs, given the two target node embeddings $\textbf{h}^t_u$ and
$\textbf{h}^t_v$ at time $t$, the probability $p((u,v)|t)$ for them to form a link is calculated as:
\begin{equation}
    p((u,v)|t) = \text{Sigmoid}(\text{MLP}_{link}(\textbf{h}^t_u\otimes \textbf{h}^t_v))
\end{equation}
where $\otimes$ stands for elementwise multiplication. For heterogeneous graphs, the node $v$ is viewed as a hyperedge $E_v$. Hence, $p((u,v)|t)$ for them to form a link is calculated as:
\begin{equation}
    p((u,v)|t) = \text{Sigmoid}(\text{MLP}_{link}(\textbf{h}^t_u\otimes \textbf{m}^t[E_v]))
\end{equation}


\textbf{Expressive power of HTGN.} From the theoretical perspective, we analyze the expressive power of HTGN, which is widely regarded as a key factor in determining the model's capability for link prediction~\cite{souza2022provably}. Due to the ability to represent higher-order structures effectively, HTGN is more powerful than traditional pairwise message-passing temporal graph neural networks~(e.g., TGN, TGAT, GraphMixer, and so on). 
Formally, we develop the theorem below:
\begin{theorem}\label{expressive}
    HTGN is strictly more expressive than pairwise message-passing TGNN.
\end{theorem}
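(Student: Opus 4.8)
The plan is to prove the two inclusions that together yield a \emph{strict} separation: first that HTGN can reproduce any function computable by a pairwise message-passing TGNN (``at least as expressive''), and then that there is a pair of temporal-graph instances on which every pairwise message-passing TGNN must agree while HTGN can separate them (``strictly more''). Throughout I would adopt the characterization of pairwise message-passing expressiveness via the temporal Weisfeiler--Leman test of \cite{souza2022provably}, so that ``indistinguishable by a pairwise TGNN'' is made precise as ``assigned identical temporal-WL colors''.

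For the first inclusion I would \emph{restrict} HTGN so that Algorithm~\ref{alg:example} never merges edges into larger cliques, i.e.\ every hyperedge is kept at size two. Under this restriction each memory unit $\mathbf{m}[E]$ tracks a single node pair, the hyperedge convolution collapses to ordinary one-hop temporal neighbor aggregation, and the aggregation over $\mathcal{D}_c$ is vacuous. Because the memory and message maps are realized by MLPs and the aggregation uses the injective exponential-weighting scheme already adopted from \cite{souza2022provably}, a suitable choice of weights makes the restricted HTGN compute exactly the update of a given pairwise TGNN; hence every function in the pairwise class lies in HTGN's class.

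For strictness I would exhibit two temporal graphs $\mathcal{G}_1,\mathcal{G}_2$ on a common node set whose pairwise (2-section) structures are $1$-WL-indistinguishable but whose clique structure differs. Concretely, take $\mathcal{G}_1$ to be a single $6$-cycle $C_6$ and $\mathcal{G}_2$ to be two disjoint triangles $2C_3$, assigning all edges identical features and timestamps inside one snapshot window $b$. Both graphs are $2$-regular, so colour refinement, and therefore the temporal-WL bound of \cite{souza2022provably}, forces any pairwise message-passing TGNN to output identical node representations and thus identical link probabilities $p((u,v)\mid t)$ on the two graphs. In contrast, Algorithm~\ref{alg:example} constructs a size-$3$ hyperedge for each triangle of $\mathcal{G}_2$ but only size-$2$ hyperedges for $\mathcal{G}_1$; this changes the sizes $|E|$ fed into the memory update and the membership sets $H[u]$ entering $\mathbf{h}^{(1)}_u$, so HTGN yields different embeddings and different predictions. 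A pair separable by HTGN but not by any pairwise TGNN establishes strictness.

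The main obstacle is making the first (simulation) direction exact rather than merely intuitive: I must check that the auxiliary ingredients of HTGN that have no pairwise analogue --- the hyperedge-size input $|E|$, the \emph{random} selection of $E_u,E_v$ in the memory-feature update, and the time encoding $\boldsymbol{\phi}$ --- can be neutralized (e.g.\ by fixing $|E|=2$, making the selection deterministic on singleton membership sets, and matching $\boldsymbol{\phi}$ to the baseline's temporal features) so that the restricted HTGN computation tree coincides with the pairwise one step for step. The separation direction is comparatively routine once a $1$-WL-indistinguishable pair is fixed, provided one verifies that the shared timestamps and features give the pairwise model no extra discriminating power beyond static $1$-WL.
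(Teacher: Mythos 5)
Your overall skeleton matches the paper's: both arguments prove the two inclusions, both use the temporal-computation-tree / temporal-WL framework of \citet{souza2022provably} to formalize what a pairwise message-passing TGNN can distinguish, and both obtain strictness from a pair of graphs that are 1-WL-indistinguishable at the pairwise level but differ in their triangle (size-3 clique) structure. Your $C_6$ versus $2C_3$ example plays exactly the role of the paper's Figure~\ref{fig:expressive} pair, and your check that 2-regularity defeats color refinement is, if anything, more explicit than the paper's.

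The simulation direction is where you genuinely diverge, and where there is a gap. You establish ``at least as expressive'' by restricting HTGN so that Algorithm~\ref{alg:example} never merges edges into hyperedges of size greater than two, i.e., by exhibiting a degenerate HTGN instance that coincides with a pairwise TGNN. But your strictness direction requires an HTGN instance in which the clique-merging \emph{is} active (otherwise $C_6$ and $2C_3$ both yield only size-2 hyperedges and the separation collapses). So your two inclusions are witnessed by different instances of the architecture, and you never show that a single HTGN --- one that actually aggregates edges into larger hyperedges --- still distinguishes everything a pairwise TGNN does. This is precisely the case the paper's proof must, and does, address: when higher-order structures are present, every link feature consumed by the pairwise model has been folded into some hyperedge memory via an aggregation the paper takes to be injective (the exponentially time-weighted MLP sum), so the merge destroys no pairwise-distinguishing information. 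To close your argument you need either (a) that information-preservation step for the non-degenerate HTGN, or (b) an explicit convention that expressiveness is the union over hyperparameter settings, under which the restriction trick is legitimate but the theorem weakens to a statement about the model class rather than any deployed model. Your listed obstacles (neutralizing $|E|$, the random selection of $E_u \in H[u]$, the time encoding) are real but secondary to this instance mismatch; note in fairness that the random hyperedge selection threatens the paper's own injectivity claim just as much as it threatens yours.
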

For the detailed proof, please refer to Appendix~\ref{app:proof2}. 

\section{Experiment}

In this section, we aim to answer the following research questions with experiments: 
\textbf{RQ1:} How does HTGN perform compared to the baselines?
\textbf{RQ2:} Is HTGN efficient compared to the baselines? and
\textbf{RQ3:} How do the key modules of HTGN affect its performance?


\subsection{Experimental Settings}

\setlength{\textfloatsep}{3pt}

\begin{table*}[!t]
\centering
\caption{Performances on \textbf{homogeneous} graphs.
The metric in the table is MRR, where higher values indicate better performance. 
All numerical results are averages across 3 random runs. Numbers in \textbf{\textcolor{red}{red}} represent the best results and \textbf{\textcolor{blue}{blue}} the second best. OOT indicates the out-of-time problem.}

\label{table:homo}

\vspace{0.1cm}
\resizebox{\textwidth}{!}
{
\begin{tabular}{lllcccccccccccc}
\toprule[2pt]
\multicolumn{3}{c}{\multirow{2}{*}{Method}} & \multicolumn{2}{c}{UCI} & \multicolumn{2}{c}{Enorn} & \multicolumn{2}{c}{Social Evo.}   & \multicolumn{2}{c}{tgbl-coin} & \multicolumn{2}{c}{tgbl-flight} & \multicolumn{1}{c}{\multirow{2}{*}{Rank}}\\

\cmidrule(r){4-5} \cmidrule(r){6-7} \cmidrule(r){8-9} \cmidrule(r){10-11} \cmidrule(r){12-13} 
& & & \multicolumn{1}{c}{Val} & \multicolumn{1}{c}{Test} & \multicolumn{1}{c}{Val} & \multicolumn{1}{c}{Test} &\multicolumn{1}{c}{Val} & \multicolumn{1}{c}{Test} &\multicolumn{1}{c}{Val} & \multicolumn{1}{c}{Test} &\multicolumn{1}{c}{Val} & \multicolumn{1}{c}{Test} \\
\hline

\multicolumn{3}{l}{TGN} & 29.4 & 29.9 & 50.7 & 51.1 & 34.2 &35.6 & 72.2 & 73.1   & 70.1 & 70.4& 4.50\\
        \multicolumn{3}{l}{TNCN} & 34.2 & 35.3 & 53.7 & 54.4 & 43.2 & 42.3 & \textbf{\textcolor{blue}{72.5}} &\textbf{\textcolor{blue}{73.9}} & \textbf{\textcolor{blue}{78.3}} & \textbf{\textcolor{blue}{79.8}}  & \textbf{\textcolor{blue}{2.80}}\\ 
        \multicolumn{3}{l}{DyGformer} & 34.7 & 35.9 & \textbf{\textcolor{red}{58.4}} & \textbf{\textcolor{red}{59.2}} & 33.7 & 34.5 & OOT & OOT & OOT & OOT &4.00\\ 
        \multicolumn{3}{l}{CAWN} & \textbf{\textcolor{blue}{35.9}} & \textbf{\textcolor{blue}{37.3}} & 41.3 & 42.4 & 38.9 & 39.2 & OOT & OOT & OOT & OOT&4.33\\ 
        \multicolumn{3}{l}{NAT} & 33.1 & 33.7 & 49.1 & 47.3 & \textbf{\textcolor{red}{49.5}} & \textbf{\textcolor{red}{48.1}} & OOT & OOT & OOT & OOT & 3.67 \\ 
        \multicolumn{3}{l}{GraphMixer} & 17.6 & 20.4 & 21.9 & 23.7 & 35.5 & 35.3 & OOT & OOT & OOT & OOT &7.50\\ 
        \multicolumn{3}{l}{TGAT} & 20.4 & 23.5 & 36.3 & 35.8 & 40.4 & 41.6 & OOT & OOT & OOT & OOT &6.00 \\ 
        \multicolumn{3}{l}{TCL} & 13.6 & 12.8  & 14.5 & 14.8 & 24.6 & 24.8 & OOT & OOT & OOT & OOT &9.00\\ 
        \multicolumn{3}{l}{DyRep} & 7.2 & 5.0 & 11.2 & 12.7 & 15.6 & 19.4 & 45.2 & 44.8 &55.6 &54.5&7.60 \\ 
         \hline
        \multicolumn{3}{l}{HTGN} & \textbf{\textcolor{red}{38.1}} & \textbf{\textcolor{red}{37.8}} & \textbf{\textcolor{blue}{56.8}} & \textbf{\textcolor{blue}{57.3}} & \textbf{\textcolor{blue}{44.8}} & \textbf{\textcolor{blue}{44.3}} & \textbf{\textcolor{red}{75.6}} & \textbf{\textcolor{red}{75.8}} & \textbf{\textcolor{red}{79.4}} & \textbf{\textcolor{red}{80.5}} & \textbf{\textcolor{red}{1.40}} \\
\bottomrule[2pt]
\end{tabular}
}
\end{table*}

\setlength{\textfloatsep}{3pt}

\begin{table*}[!t]
\centering
\caption{Performances on \textbf{heterogeneous} graphs.
The metric in the table is MRR, where higher values indicate better performance. 
All numerical results are averages across 3 random runs. Numbers in \textbf{\textcolor{red}{red}} represent the best results and \textbf{\textcolor{blue}{blue}} the second best. }

\label{table:hetero}

\vspace{0.1cm}
\resizebox{\textwidth}{!}
{
\begin{tabular}{lllcccccccccccc}
\toprule[2pt]
\multicolumn{3}{c}{\multirow{2}{*}{Method}} & \multicolumn{2}{c}{tgbl-wiki} & \multicolumn{2}{c}{Reddit} & \multicolumn{2}{c}{LastFM}   & \multicolumn{2}{c}{MOOC} & \multicolumn{2}{c}{tgbl-review} & \multicolumn{1}{c}{\multirow{2}{*}{Rank}}\\

\cmidrule(r){4-5} \cmidrule(r){6-7} \cmidrule(r){8-9} \cmidrule(r){10-11} \cmidrule(r){12-13} 
& & & \multicolumn{1}{c}{Val} & \multicolumn{1}{c}{Test} & \multicolumn{1}{c}{Val} & \multicolumn{1}{c}{Test} &\multicolumn{1}{c}{Val} & \multicolumn{1}{c}{Test} &\multicolumn{1}{c}{Val} & \multicolumn{1}{c}{Test} &\multicolumn{1}{c}{Val} & \multicolumn{1}{c}{Test} \\
\hline

\multicolumn{3}{l}{TGN} & 61.5 & 61.6 & 51.4 & 57.0 & 48.3 & 49.7 & 35.9 & 37.8  & 51.2 & \textbf{\textcolor{blue}{52.8}}& 4.0 \\
        \multicolumn{3}{l}{TNCN} & 71.1 & 70.9 & 49.3 & 47.2 & 43.8 & 42.6 & 40.6 & 41.1 & 32.8 &38.1 & 4.5\\ 
        \multicolumn{3}{l}{DyGformer} & \textbf{\textcolor{blue}{78.2}} & \textbf{\textcolor{blue}{77.4}} & 27.3 & 26.5 & \textbf{\textcolor{blue}{54.7}} & \textbf{\textcolor{blue}{51.6}} & 28.9 & 31.4 & 34.1 & 35.4 & 5.1 \\ 
        \multicolumn{3}{l}{CAWN} & 70.3 & 71.1 & 29.2 & 28.1 & 37.5 & 36.2 & 35.6 & 35.9 & 24.5 & 23.4 & 6.7\\ 
        \multicolumn{3}{l}{NAT} & 74.9 & 73.6 & \textbf{\textcolor{blue}{58.4}} & \textbf{\textcolor{blue}{59.2}} & 49.5 & 48.1 & 43.2 & 42.7 & 34.5 & 35.6 & \textbf{\textcolor{blue}{3.2}} \\ 
        \multicolumn{3}{l}{GraphMixer} & 17.6 & 20.4 & 21.9 & 23.7 & 35.5 & 35.3 & \textbf{\textcolor{blue}{43.8}} & \textbf{\textcolor{blue}{44.4}} & \textbf{\textcolor{red}{52.1}} & \textbf{\textcolor{red}{53.2}} & 6.0\\ 
        \multicolumn{3}{l}{TGAT} & 20.4 & 23.5 & 36.3 & 35.8 & 40.4 & 41.6 & 34.2 & 35.3 & 30.8 &31.2 & 6.4\\ 
        \multicolumn{3}{l}{TCL} & 19.8 & 20.7  & 32.4 & 33.2 & 38.6 & 37.9 & 22.7 &21.3 & 19.9 & 19.3 & 8.0\\ 
        \multicolumn{3}{l}{DyRep} & 7.2 & 5.0 & 11.2 & 12.7 & 15.6 & 19.4 & 18.3 & 17.2 &21.6 &22.0 & 9.8\\ 
         \hline

        \multicolumn{3}{l}{HTGN} & \textbf{\textcolor{red}{79.6}} & \textbf{\textcolor{red}{78.3}} & \textbf{\textcolor{red}{60.8}} & \textbf{\textcolor{red}{59.7}} & \textbf{\textcolor{red}{55.8}} & \textbf{\textcolor{red}{56.9}} & \textbf{\textcolor{red}{45.6}} & \textbf{\textcolor{red}{45.8}} & \textbf{\textcolor{blue}{51.5}} & 52.2 & \textbf{\textcolor{red}{1.3}} \\
\bottomrule[2pt]
\end{tabular}
}
\end{table*}

\begin{figure*}[ht]
\vskip 0.2in
\begin{center}
\centerline{\includegraphics[width=2\columnwidth]{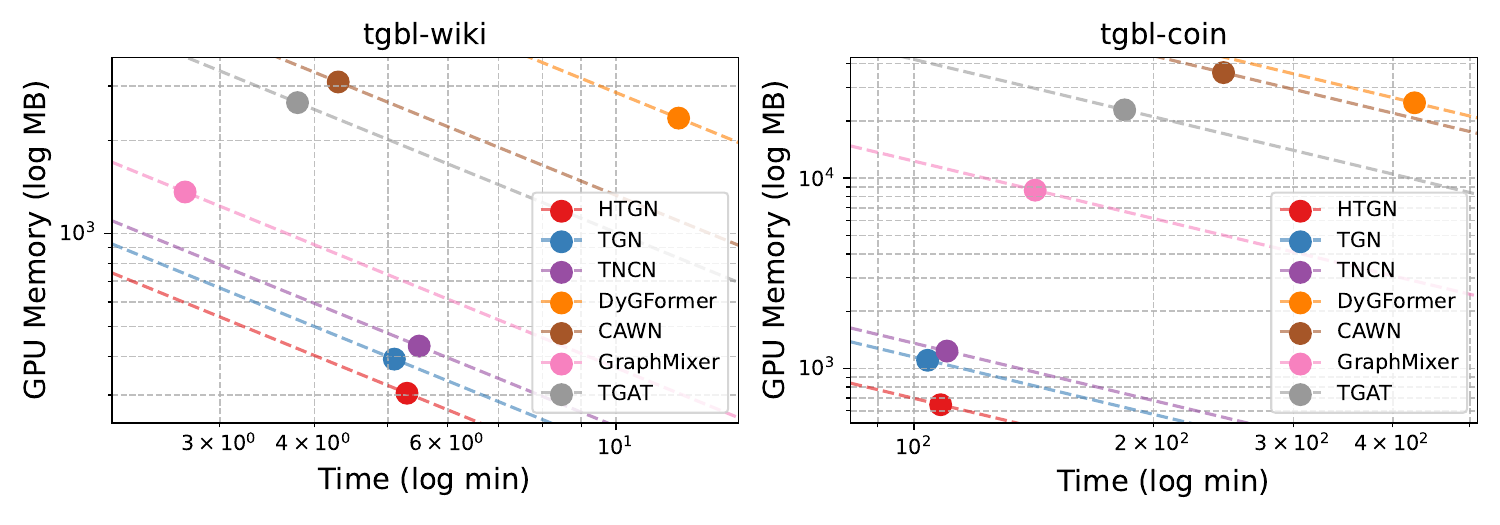}}
\vskip -0.2in
\caption{The efficiency of HTGN compared to the baselines. The x-axis indicates the running time of a single epoch, the y-axis indicates the model's GPU memory costs. Lower x-axis and y-axis metrics indicate better efficiency. HTGN exhibits considerable efficiency advantages against the baselines, especially in GPU memory costs.}
\label{fig:efficiency}
\end{center}
\vskip -0.2in
\end{figure*}

\textbf{Datasets and Evaluation.} 
We evaluate our models on ten real-world datasets, which are from the work of \citet{poursafaei2022towards} and \textit{Temporal Graph Benchmark}~\cite{huang2023temporal}.
These datasets include both homogeneous and heterogeneous networks from diverse domains, such as communication, transportation, and social networks. 
For further details of the datasets, please refer to Appendix~\ref{app:dataset}. 
We follow previous works~\cite{patania2017topological} and apply both random sampling and historical sampling to collect negative samples.
Following~\cite{poursafaei2022towards, huang2023temporal}, we adopt the Mean Reciprocal Rank~(MRR) as the metric for evaluation.

\textbf{Baselines.} 
We consider two types of representative baselines. For memory-based method, we evaluate DyRep~\cite{trivedi2019dyrep}, TGN~\cite{rossi2020temporal} and TNCN~\cite{zhang2024efficient}. For graph-based methods, we evaluate TGAT~\cite{xu2020inductive}, TCL~\cite{wang2021tcl}, CAWN~\cite{wang2022inductiverepresentationlearningtemporal}, DyGformer~\cite{yu2023towards} and GraphMixer~\cite{cong2023do}.

\begin{figure}[ht]
\begin{center}
\centerline{\includegraphics[width=\columnwidth]{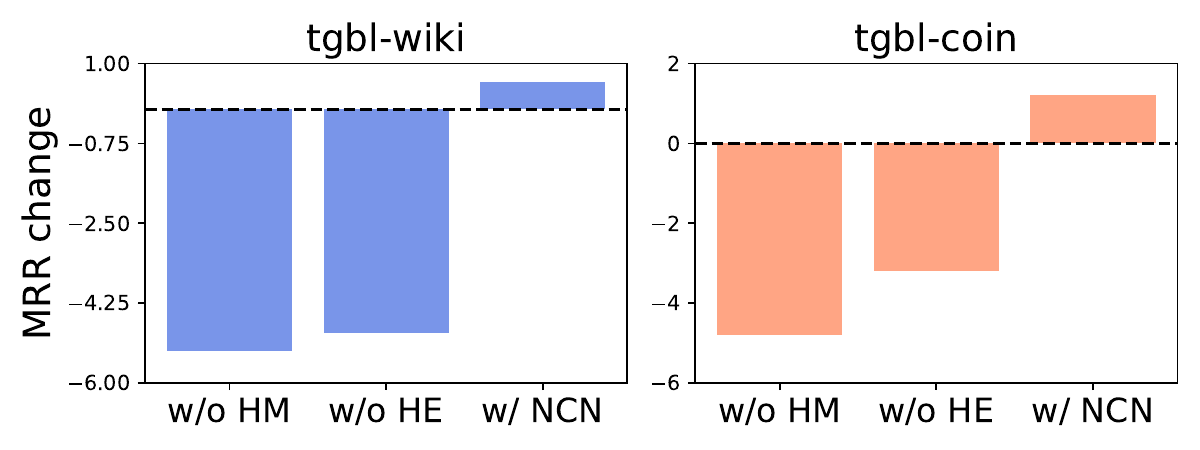}}
\vskip -0.1in
\caption{The change of MRR compared to original HTGN after removing/adding modules. HM indicates higher-order structure memory, HE indicates hypergraph embedding, and NCN indicates the common-neighbor information~\cite{wang2023neural}.}
\label{fig:ablation}
\end{center}
\vskip -0.1in
\end{figure}

\subsection{RQ1: Dynamic Link Prediction Performance}\label{subsec:performance}
\textbf{Setup.} 
We first compare the dynamic link prediction performance of HTGN against the baselines on both homogeneous and heterogeneous graphs. 
For a fair comparison, we keep the size of the node and edge embeddings the same for all the methods.
Specifically, we set the dimension of node embedding to be $100$ and the dimension of edge embedding to be $175$. 
Moreover, except GraphMixer~\cite{cong2023do} which fixes the number of hops equal to 1,  we set the number of neighborhood hops to 2 for all methods.

\textbf{Results.} 
The results are shown in Table~\ref{table:homo} and \ref{table:hetero}. From the tables, the following observations can be made:
{\bf Obs. (1)} 
From the results in Tables~\ref{table:homo} and \ref{table:hetero}, we can see that HTGN achieves superior performance compared to existing baselines. Specifically, it is either the 1st or 2nd best method on 9/10 datasets. Furthermore, it yields an {\bf average improvement of 5\% over the second-best method} on both heterogeneous and homogeneous datasets, underscoring its effectiveness.
{\bf Obs. (2)} The performance of baselines can significantly differ between homogeneous and heterogeneous datasets. For instance, TNCN~\cite{zhang2024efficient}'s ranks have a mean rank of 2.8 on the homogeneous datasets but only 4.5 on the heterogeneous ones. In contrast, HTGN consistently performs well on both dataset types, with a mean rank of 1.3 and 1.4, respectively.
{\bf Obs. (3)} Compared to TGN, whose vanilla memory module stores node features without leveraging higher-order structural information, HTGN demonstrates significant performance gains. This finding highlights the importance of higher-order structures in dynamic link prediction. 

\subsection{RQ2: Efficiency Comparison}
\textbf{Setup.} Beyond performance, we are also interested in each model’s efficiency, particularly GPU memory consumption and training time. To ensure fairness, we use the same settings as that in Section~\ref{subsec:performance} and record memory and time costs for a single training epoch. 
HTGN converges within a similar number of epochs as TNCN and TGN.

\textbf{Results.} 
The results are shown in Figure~\ref{fig:efficiency}. 
Compared to subgraph-based methods~\cite{xu2020inductive,yu2023towards}, memory-based approaches require less GPU memory and training time, consistent with the observations by \citet{rossi2020temporal} and \citet{zhang2024efficient}.
Furthermore, among memory-based methods (TGN and TNCN), HTGN reduces GPU memory usage by 30\%–50\%, showing that storing higher-order structural features in memory effectively lowers resource costs.
Lastly, despite performing online updates of higher-order structures, HTGN does not incur significant additional time overhead compared to TGN or TNCN, confirming the efficiency of our proposed hyperedge construction algorithm.

\subsection{RQ3: Ablation Study}\label{subsec:ablation}


We study the impact of the different modules in our framework. The result is shown in Figure~\ref{fig:ablation}). Specifically, we replaced the high-order memory module with vanilla node memory and the hypergraph convolution with graph convolution. Both substitutions led to substantial performance drops, indicating that these modules each contribute to performance and should be integrated to fully leverage the expressive power of higher-order structures. We also examined whether incorporating common neighbor information could further improve HTGN. This addition yielded a small performance gain, suggesting that common neighbor information may capture structural cues not fully captured by HTGN.

\section{Conclusion}
In this work, we address the challenge of designing temporal graph neural networks that are both efficient and expressive. We propose HTGN, which significantly reduces memory costs while enhancing link prediction performance by incorporating higher-order structural information.
A potential future direction for this research is the development of a temporal graph foundation model~\cite{maoposition,shirzadkhani2024towards} capable of generalizing across multiple datasets by leveraging commonly shared higher-order structural information.


\bibliography{main}

\begin{thebibliography}{39}
\providecommand{\natexlab}[1]{#1}
\providecommand{\url}[1]{\texttt{#1}}
\expandafter\ifx\csname urlstyle\endcsname\relax
  \providecommand{\doi}[1]{doi: #1}\else
  \providecommand{\doi}{doi: \begingroup \urlstyle{rm}\Url}\fi

\bibitem[Bai et~al.(2021)Bai, Zhang, and Torr]{bai2021hypergraph}
Bai, S., Zhang, F., and Torr, P.~H.
\newblock Hypergraph convolution and hypergraph attention.
\newblock \emph{Pattern Recognition}, 110:\penalty0 107637, 2021.

\bibitem[Battiston et~al.(2021)Battiston, Amico, Barrat, Bianconi, Ferraz~de Arruda, Franceschiello, Iacopini, K{\'e}fi, Latora, Moreno, et~al.]{battiston2021physics}
Battiston, F., Amico, E., Barrat, A., Bianconi, G., Ferraz~de Arruda, G., Franceschiello, B., Iacopini, I., K{\'e}fi, S., Latora, V., Moreno, Y., et~al.
\newblock The physics of higher-order interactions in complex systems.
\newblock \emph{Nature Physics}, 17\penalty0 (10):\penalty0 1093--1098, 2021.

\bibitem[Benson et~al.(2016)Benson, Gleich, and Leskovec]{benson2016higher}
Benson, A.~R., Gleich, D.~F., and Leskovec, J.
\newblock Higher-order organization of complex networks.
\newblock \emph{Science}, 353\penalty0 (6295):\penalty0 163--166, 2016.

\bibitem[Benson et~al.(2018)Benson, Abebe, Schaub, Jadbabaie, and Kleinberg]{benson2018simplicial}
Benson, A.~R., Abebe, R., Schaub, M.~T., Jadbabaie, A., and Kleinberg, J.
\newblock Simplicial closure and higher-order link prediction.
\newblock \emph{Proceedings of the National Academy of Sciences}, 115\penalty0 (48):\penalty0 E11221--E11230, 2018.

\bibitem[Berge(1984)]{berge1984hypergraphs}
Berge, C.
\newblock \emph{Hypergraphs: combinatorics of finite sets}, volume~45.
\newblock Elsevier, 1984.

\bibitem[Cai et~al.(2024)Cai, Mao, Zhou, Long, Wu, and Lan]{cai2024survey}
Cai, L., Mao, X., Zhou, Y., Long, Z., Wu, C., and Lan, M.
\newblock A survey on temporal knowledge graph: Representation learning and applications.
\newblock \emph{arXiv preprint arXiv:2403.04782}, 2024.

\bibitem[Cho et~al.(2014)Cho, Van~Merri{\"e}nboer, Gulcehre, Bahdanau, Bougares, Schwenk, and Bengio]{cho2014learning}
Cho, K., Van~Merri{\"e}nboer, B., Gulcehre, C., Bahdanau, D., Bougares, F., Schwenk, H., and Bengio, Y.
\newblock Learning phrase representations using rnn encoder-decoder for statistical machine translation.
\newblock \emph{arXiv preprint arXiv:1406.1078}, 2014.

\bibitem[Cong et~al.(2023)Cong, Zhang, Kang, Yuan, Wu, Zhou, Tong, and Mahdavi]{cong2023do}
Cong, W., Zhang, S., Kang, J., Yuan, B., Wu, H., Zhou, X., Tong, H., and Mahdavi, M.
\newblock Do we really need complicated model architectures for temporal networks?
\newblock In \emph{The Eleventh International Conference on Learning Representations}, 2023.
\newblock URL \url{https://openreview.net/forum?id=ayPPc0SyLv1}.

\bibitem[Germain et~al.(2015)Germain, Lacasse, Laviolette, March, and Roy]{germain2015risk}
Germain, P., Lacasse, A., Laviolette, F., March, M., and Roy, J.-F.
\newblock Risk bounds for the majority vote: From a pac-bayesian analysis to a learning algorithm.
\newblock \emph{Journal of Machine Learning Research}, 16\penalty0 (26):\penalty0 787--860, 2015.

\bibitem[Holme \& Saram{\"a}ki(2012)Holme and Saram{\"a}ki]{holme2012temporal}
Holme, P. and Saram{\"a}ki, J.
\newblock Temporal networks.
\newblock \emph{Physics reports}, 519\penalty0 (3):\penalty0 97--125, 2012.

\bibitem[Huang et~al.(2023)Huang, Poursafaei, Danovitch, Fey, Hu, Rossi, Leskovec, Bronstein, Rabusseau, and Rabbany]{huang2023temporal}
Huang, S., Poursafaei, F., Danovitch, J., Fey, M., Hu, W., Rossi, E., Leskovec, J., Bronstein, M., Rabusseau, G., and Rabbany, R.
\newblock Temporal graph benchmark for machine learning on temporal graphs, 2023.

\bibitem[Kingma(2014)]{kingma2014adam}
Kingma, D.~P.
\newblock Adam: A method for stochastic optimization.
\newblock \emph{arXiv preprint arXiv:1412.6980}, 2014.

\bibitem[Kirkley(2024)]{kirkley2024inference}
Kirkley, A.
\newblock Inference of dynamic hypergraph representations in temporal interaction data.
\newblock \emph{Physical Review E}, 109\penalty0 (5):\penalty0 054306, 2024.

\bibitem[Kumar et~al.(2019)Kumar, Zhang, and Leskovec]{kumar2019predicting}
Kumar, S., Zhang, X., and Leskovec, J.
\newblock Predicting dynamic embedding trajectory in temporal interaction networks.
\newblock In \emph{Proceedings of the 25th ACM SIGKDD international conference on knowledge discovery \& data mining}, pp.\  1269--1278, 2019.

\bibitem[Luo \& Li(2022)Luo and Li]{luo2022neighborhood}
Luo, Y. and Li, P.
\newblock Neighborhood-aware scalable temporal network representation learning.
\newblock In \emph{Learning on Graphs Conference}, pp.\  1--1. PMLR, 2022.

\bibitem[Ma \& Tang(2021)Ma and Tang]{ma2021deep}
Ma, Y. and Tang, J.
\newblock \emph{Deep learning on graphs}.
\newblock Cambridge University Press, 2021.

\bibitem[Mao et~al.(2024)Mao, Chen, Tang, Zhao, Ma, Zhao, Shah, Galkin, and Tang]{maoposition}
Mao, H., Chen, Z., Tang, W., Zhao, J., Ma, Y., Zhao, T., Shah, N., Galkin, M., and Tang, J.
\newblock Position: Graph foundation models are already here.
\newblock In \emph{Forty-first International Conference on Machine Learning}, 2024.

\bibitem[Masuda \& Lambiotte(2016)Masuda and Lambiotte]{masuda2016guide}
Masuda, N. and Lambiotte, R.
\newblock \emph{A guide to temporal networks}.
\newblock World Scientific, 2016.

\bibitem[Patania et~al.(2017)Patania, Vaccarino, and Petri]{patania2017topological}
Patania, A., Vaccarino, F., and Petri, G.
\newblock Topological analysis of data.
\newblock \emph{EPJ Data Science}, 6\penalty0 (1):\penalty0 1--6, 2017.

\bibitem[Petri \& Barrat(2018)Petri and Barrat]{petri2018simplicial}
Petri, G. and Barrat, A.
\newblock Simplicial activity driven model.
\newblock \emph{Physical review letters}, 121\penalty0 (22):\penalty0 228301, 2018.

\bibitem[Petri et~al.(2013{\natexlab{a}})Petri, Scolamiero, Donato, and Vaccarino]{petri2013networks}
Petri, G., Scolamiero, M., Donato, I., and Vaccarino, F.
\newblock Networks and cycles: a persistent homology approach to complex networks.
\newblock In \emph{Proceedings of the european conference on complex systems 2012}, pp.\  93--99. Springer, 2013{\natexlab{a}}.

\bibitem[Petri et~al.(2013{\natexlab{b}})Petri, Scolamiero, Donato, and Vaccarino]{petri2013topological}
Petri, G., Scolamiero, M., Donato, I., and Vaccarino, F.
\newblock Topological strata of weighted complex networks.
\newblock \emph{PloS one}, 8\penalty0 (6):\penalty0 e66506, 2013{\natexlab{b}}.

\bibitem[Pister \& Barthelemy(2024)Pister and Barthelemy]{pister2024stochastic}
Pister, A. and Barthelemy, M.
\newblock Stochastic block hypergraph model.
\newblock \emph{Physical Review E}, 110\penalty0 (3):\penalty0 034312, 2024.

\bibitem[Poursafaei et~al.(2022)Poursafaei, Huang, Pelrine, and Rabbany]{poursafaei2022towards}
Poursafaei, F., Huang, S., Pelrine, K., and Rabbany, R.
\newblock Towards better evaluation for dynamic link prediction.
\newblock \emph{Advances in Neural Information Processing Systems}, 35:\penalty0 32928--32941, 2022.

\bibitem[Rossi et~al.(2020)Rossi, Chamberlain, Frasca, Eynard, Monti, and Bronstein]{rossi2020temporal}
Rossi, E., Chamberlain, B., Frasca, F., Eynard, D., Monti, F., and Bronstein, M.
\newblock Temporal graph networks for deep learning on dynamic graphs, 2020.

\bibitem[Shirzadkhani et~al.(2024)Shirzadkhani, Ngo, Shamsi, Huang, Poursafaei, Azad, Rabbany, Coskunuzer, Rabusseau, and Akcora]{shirzadkhani2024towards}
Shirzadkhani, R., Ngo, T. G.~B., Shamsi, K., Huang, S., Poursafaei, F., Azad, P., Rabbany, R., Coskunuzer, B., Rabusseau, G., and Akcora, C.~G.
\newblock Towards neural scaling laws for foundation models on temporal graphs.
\newblock \emph{arXiv preprint arXiv:2406.10426}, 2024.

\bibitem[Souza et~al.(2022)Souza, Mesquita, Kaski, and Garg]{souza2022provably}
Souza, A., Mesquita, D., Kaski, S., and Garg, V.
\newblock Provably expressive temporal graph networks.
\newblock \emph{Advances in neural information processing systems}, 35:\penalty0 32257--32269, 2022.

\bibitem[Trivedi et~al.(2019)Trivedi, Farajtabar, Biswal, and Zha]{trivedi2019dyrep}
Trivedi, R., Farajtabar, M., Biswal, P., and Zha, H.
\newblock Dyrep: Learning representations over dynamic graphs.
\newblock In \emph{International conference on learning representations}, 2019.

\bibitem[Vaswani(2017)]{vaswani2017attention}
Vaswani, A.
\newblock Attention is all you need.
\newblock \emph{Advances in Neural Information Processing Systems}, 2017.

\bibitem[Wang et~al.(2021)Wang, Chang, Li, Chu, Li, Zhang, He, Song, Zhou, and Yang]{wang2021tcl}
Wang, L., Chang, X., Li, S., Chu, Y., Li, H., Zhang, W., He, X., Song, L., Zhou, J., and Yang, H.
\newblock Tcl: Transformer-based dynamic graph modelling via contrastive learning.
\newblock \emph{arXiv preprint arXiv:2105.07944}, 2021.

\bibitem[Wang et~al.(2023)Wang, Yang, and Zhang]{wang2023neural}
Wang, X., Yang, H., and Zhang, M.
\newblock Neural common neighbor with completion for link prediction.
\newblock \emph{arXiv preprint arXiv:2302.00890}, 2023.

\bibitem[Wang \& Kleinberg(2024)Wang and Kleinberg]{wang2024graphs}
Wang, Y. and Kleinberg, J.
\newblock From graphs to hypergraphs: Hypergraph projection and its remediation.
\newblock \emph{arXiv preprint arXiv:2401.08519}, 2024.

\bibitem[Wang et~al.(2022)Wang, Chang, Liu, Leskovec, and Li]{wang2022inductiverepresentationlearningtemporal}
Wang, Y., Chang, Y.-Y., Liu, Y., Leskovec, J., and Li, P.
\newblock Inductive representation learning in temporal networks via causal anonymous walks, 2022.
\newblock URL \url{https://arxiv.org/abs/2101.05974}.

\bibitem[Xu et~al.(2020)Xu, Ruan, Korpeoglu, Kumar, and Achan]{xu2020inductive}
Xu, D., Ruan, C., Korpeoglu, E., Kumar, S., and Achan, K.
\newblock Inductive representation learning on temporal graphs, 2020.

\bibitem[Xu et~al.(2018)Xu, Hu, Leskovec, and Jegelka]{xu2018powerful}
Xu, K., Hu, W., Leskovec, J., and Jegelka, S.
\newblock How powerful are graph neural networks?
\newblock \emph{arXiv preprint arXiv:1810.00826}, 2018.

\bibitem[Young et~al.(2021)Young, Petri, and Peixoto]{young2021hypergraph}
Young, J.-G., Petri, G., and Peixoto, T.~P.
\newblock Hypergraph reconstruction from network data.
\newblock \emph{Communications Physics}, 4\penalty0 (1):\penalty0 135, 2021.

\bibitem[Yu et~al.(2023)Yu, Sun, Du, and Lv]{yu2023towards}
Yu, L., Sun, L., Du, B., and Lv, W.
\newblock Towards better dynamic graph learning: New architecture and unified library.
\newblock In \emph{Thirty-seventh Conference on Neural Information Processing Systems}, 2023.
\newblock URL \url{https://openreview.net/forum?id=xHNzWHbklj}.

\bibitem[Zhang et~al.(2024)Zhang, Wang, Wang, and Zhang]{zhang2024efficient}
Zhang, X., Wang, Y., Wang, X., and Zhang, M.
\newblock Efficient neural common neighbor for temporal graph link prediction.
\newblock \emph{arXiv preprint arXiv:2406.07926}, 2024.

\bibitem[Zitnik et~al.(2018)Zitnik, Agrawal, and Leskovec]{zitnik2018modeling}
Zitnik, M., Agrawal, M., and Leskovec, J.
\newblock Modeling polypharmacy side effects with graph convolutional networks.
\newblock \emph{Bioinformatics}, 34\penalty0 (13):\penalty0 i457--i466, 2018.

\end{thebibliography}
\bibliographystyle{icml2025}

\newpage
\appendix
\onecolumn

\section{Proof of Theorem~\ref{thm:main}}\label{app:proof1}

\begin{definition}
Let \( V = \{1, 2, \dots, n\} \) be the set of nodes, divided into \( K \) communities \( \{C_1, C_2, \dots, C_K\} \), where \( \bigcup_{a=1}^K C_a = V \). 
Let \( \Lambda_t \in \mathbb{R}^{K \times K} \) denote the community connection probability matrix at time \( t \), where \( \Lambda_t[a, b] \) represents the connection probability between communities \( C_a \) and \( C_b \). The time evolution of connection probabilities is given by:
\[
\Lambda_t[a, b] = \Lambda_0[a, b] \cdot \pi(t),
\]
where \( \pi(t) \) is the probability density function with respect to $t$ function, which means evolving with time. 
A hyperedge \( E \subseteq V \) of size \( k \) is said to \emph{exist} at time \( t \) if every pair of nodes \( i, j \in E \) is connected:
\[
\Pr(E \text{ exists at time } t) = \prod_{i, j \in E, i < j} \Pr((i, j) \in A_t),
\]
where \( A_t \) is the adjacency matrix of the network snapshot at time \( t \).
\end{definition}
\begin{proposition}[Probability of Fully Connected Hyperedge]
For a hyperedge \( e \) with size $k$ , the probability of \( e \) being fully connected at time \( t \) is given by:
\[
\Pr(e \text{ is fully connected at time } t) =
\begin{cases}
\prod_{i, j \in e, i < j} \Lambda_t[a, a], & \text{if } e \subseteq C_a, \\
\prod_{i \in C_a, j \in C_b} \Lambda_t[a, b], & \text{if } e \subseteq C_a \cup C_b,
\end{cases}
\]
where \( \Lambda_t[a, b] = \Lambda_0[a, b] \cdot \pi(t) \) is the connection probability at time \( t \). And $\Lambda_t[a, b]$, $\Lambda_t[a, a]$ can be seen as a deterministic probability distribution over t.
\end{proposition}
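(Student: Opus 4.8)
The plan is to derive the proposition as a direct consequence of the defining conditional-independence structure of the HT-SBM together with the stated definition of hyperedge existence. First I would make explicit the standard stochastic-block-model assumption that, conditioned on the community assignment, the edge indicators $\mathbf{1}[(i,j)\in A_t]$ are mutually independent Bernoulli variables whose success probability depends only on the pair of communities to which the endpoints belong; that is, if $i\in C_a$ and $j\in C_b$ then $\Pr((i,j)\in A_t)=\Lambda_t[a,b]$. This is precisely the property encoded by the connection-probability matrix $\Lambda_t$, and it is what allows the joint ``fully connected'' event to factorize into pairwise terms.

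Next I would invoke the definition of hyperedge existence from the HT-SBM, namely $\Pr(e \text{ fully connected at } t)=\prod_{i,j\in e,\,i<j}\Pr((i,j)\in A_t)$. Because of the independence established above, this product is a legitimate probability of the conjunction of the pairwise connection events rather than a mere formal expression, so no additional justification of the factorization is required beyond citing the SBM assumption.

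Then I would split into the two cases according to the community membership of the nodes of $e$. In the homogeneous case $e\subseteq C_a$, every unordered pair $\{i,j\}\subseteq e$ has both endpoints in $C_a$, so each factor equals $\Lambda_t[a,a]$ and the product collapses to $\prod_{i,j\in e,\,i<j}\Lambda_t[a,a]=\Lambda_t[a,a]^{\binom{k}{2}}$. In the heterogeneous (bipartite) case $e\subseteq C_a\cup C_b$, the only connections that can---and must---be present are those crossing the two partitions, since a bipartite graph has no within-partition edges; hence the required pairs are exactly those with $i\in C_a$ and $j\in C_b$, each factor equals $\Lambda_t[a,b]$, and the product becomes $\prod_{i\in C_a,\,j\in C_b}\Lambda_t[a,b]$. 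Finally, substituting $\Lambda_t[a,b]=\Lambda_0[a,b]\,\pi(t)$ and observing that for fixed $\Lambda_0$ and a fixed density $\pi$ these factors are deterministic functions of $t$, I would conclude that the resulting expression is deterministic in $t$, as asserted.

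The main obstacle, and the only step needing genuine care rather than mechanical substitution, is justifying the factorization in the heterogeneous case: one must explain why only the cross-community pairs enter the product. The clean resolution is to appeal to the bipartite convention established earlier in the paper (all heterogeneous graphs here are bipartite), under which within-partition edges are structurally absent, so ``fully connected'' for such a hyperedge means complete bipartite connectivity between the two partitions. I would therefore state explicitly that the second case is understood relative to this bipartite convention; with that convention fixed, the remaining manipulations are routine applications of conditional independence and of the parametrization $\Lambda_t=\Lambda_0\,\pi(t)$.
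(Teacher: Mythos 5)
Your proposal is correct and follows essentially the same route as the paper's proof: a direct two-case split on the community membership of $e$, factorization of the fully-connected event into independent pairwise connection events, and substitution of $\Lambda_t[a,b]=\Lambda_0[a,b]\,\pi(t)$. The one place you go beyond the paper is in explicitly justifying why only cross-community pairs appear in the product for the case $e\subseteq C_a\cup C_b$ (via the bipartite convention), a point the paper's proof passes over silently when it counts only the $k_a\cdot k_b$ cross pairs.
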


\begin{proof}
The probability of \( e \) being fully connected requires that all pairs of nodes in \( e \) are connected. 

\textbf{Case 1: \( e \subseteq C_a \)}. All nodes in \( e \) belong to the same community \( C_a \). The connection probability between any two nodes \( i, j \in e \) is \( \Lambda_t[a, a] \). There are \( \binom{k}{2} \) node pairs, so:
\[
\Pr(e \text{ is fully connected} \mid e \subseteq C_a) = \prod_{i, j \in e, i < j} \Lambda_t[a, a] = \left(\Lambda_t[a, a]\right)^{\binom{k}{2}}.
\]

\textbf{Case 2: \( e \subseteq C_a \cup C_b \)}. Nodes in \( e \) are divided between communities \( C_a \) and \( C_b \), with \( k_a \) nodes in \( C_a \) and \( k_b \) nodes in \( C_b \) (\( k_a + k_b = k \)). The connection probability between nodes in \( C_a \) and \( C_b \) is \( \Lambda_t[a, b] \), and there are \( k_a \cdot k_b \) such pairs. Therefore:
\[
\Pr(e \text{ is fully connected} \mid e \subseteq C_a \cup C_b) = \prod_{i \in C_a, j \in C_b} \Lambda_t[a, b] = \left(\Lambda_t[a, b]\right)^{k_a \cdot k_b}.
\]
This completes the proof.
\end{proof}

Next, we introduce

\begin{lemma}[Lemma 17 in \cite{germain2015risk}]
    For any two distributions \( P \) and \( Q \) defined on \( \mathcal{H} \), and any function \( \psi : \mathcal{H} \to \mathbb{R} \), the following inequality holds:
    \begin{equation}
        \mathbb{E}_{h \sim Q} [\psi(h)] \leq D_{\text{KL}}(Q \| P) + \ln \mathbb{E}_{h \sim P} [e^{\psi(h)}].
    \end{equation}
\end{lemma}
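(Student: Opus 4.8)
The plan is to prove this change-of-measure inequality directly, via a single application of Jensen's inequality to a log-transformed quantity, which is the standard route to Donsker--Varadhan type bounds. The whole argument rests on one idea: the left-hand side minus the KL term can be written as the expectation of a logarithm, at which point concavity of $\ln$ does all the work.

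First I would dispose of the degenerate case. If $Q$ is not absolutely continuous with respect to $P$, then $D_{\text{KL}}(Q \| P) = +\infty$ by convention, and the right-hand side is $+\infty$ as well: the log-moment term $\ln \mathbb{E}_{h \sim P}[e^{\psi(h)}]$ can never equal $-\infty$ since $e^{\psi}$ is strictly positive, so it lies in $(-\infty, +\infty]$, and adding it to $+\infty$ leaves $+\infty$. Hence the inequality holds trivially, and I may assume $Q \ll P$. Writing $\frac{dQ}{dP}$ for the Radon--Nikodym derivative, I use the definition $D_{\text{KL}}(Q \| P) = \mathbb{E}_{h \sim Q}\big[\ln \tfrac{dQ}{dP}(h)\big]$.

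The core computation then reads
\[
\mathbb{E}_{h \sim Q}[\psi(h)] - D_{\text{KL}}(Q \| P) = \mathbb{E}_{h \sim Q}\Big[\ln\Big( e^{\psi(h)} \, \tfrac{dP}{dQ}(h) \Big)\Big].
\]
Since $\ln$ is concave, Jensen's inequality moves the expectation inside the logarithm, giving the upper bound $\ln \mathbb{E}_{h \sim Q}\big[ e^{\psi(h)} \tfrac{dP}{dQ}(h)\big]$. Finally, the change-of-measure identity $\mathbb{E}_{h \sim Q}\big[ g(h) \tfrac{dP}{dQ}(h)\big] = \mathbb{E}_{h \sim P}[g(h)]$, applied with $g = e^{\psi}$, collapses the inner expectation to $\mathbb{E}_{h \sim P}[e^{\psi(h)}]$. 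Rearranging the resulting inequality yields exactly the claimed bound.

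There is no deep obstacle here, since the result is a classical lemma; the care required is measure-theoretic bookkeeping rather than a hard idea. The one genuinely delicate point is justifying the Jensen step and the change of measure when the quantities may be infinite: I would note that the statement is vacuous whenever $\mathbb{E}_{h\sim P}[e^{\psi(h)}] = +\infty$, and that otherwise the integrand $e^{\psi}\,\tfrac{dP}{dQ}$ is $Q$-integrable, so Jensen applies and every manipulation is legitimate. I would also remark that this lemma is precisely the tool invoked in the proof of Theorem~\ref{thm:main}: instantiating $\psi$ so that $\mathbb{E}_{h\sim Q}[\psi(h)]$ recovers the accuracy functional $Acc(\mathcal{H})$ lets the KL term and the log-moment term reproduce, respectively, the $D_{\text{KL}}(Q\|P)$ and the snapshot-duration factor $(1-e^{-\lambda t})^{2K}$ appearing in the bound.
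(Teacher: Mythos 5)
Your proof is correct, but note that the paper itself offers no proof of this statement: it is imported verbatim as Lemma~17 of \citet{germain2015risk}, so there is no in-paper argument to compare against. What you have written is the standard Donsker--Varadhan-style change-of-measure derivation --- exactly the route taken in the cited source --- with the degenerate cases ($Q \not\ll P$, $\mathbb{E}_{h\sim P}[e^{\psi(h)}]=+\infty$) handled sensibly. One small measure-theoretic nicety: writing $\tfrac{dP}{dQ}$ presumes $P \ll Q$, which need not hold; the cleaner formulation sets $f = \tfrac{dQ}{dP}$ and works on the set $\{f>0\}$ (which has full $Q$-measure), so that after Jensen you get $\ln \int_{\{f>0\}} e^{\psi}\,dP \leq \ln \mathbb{E}_{h\sim P}[e^{\psi(h)}]$, the last inequality going the right way because $e^{\psi}>0$. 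Your closing remark about Theorem~\ref{thm:main} slightly misdescribes the paper's usage: there the lemma is not instantiated so that $\mathbb{E}_{h\sim Q}[\psi(h)]$ equals the accuracy functional; rather, $Acc(\mathcal{H})$ is first bounded by $\mathbb{E}_{\Lambda_t\sim Q}\bigl[2^{-m}\bigr]$ via Theorem~\ref{thm:3}, and the lemma is then applied (with $\psi = -m\ln 2$, in effect) to that quantity --- but this is tangential to the lemma itself, whose proof as you give it is sound.
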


\begin{condition}\label{cond}
For every hyperedge \( E \in \mathcal{E} \), there does not exist a hyperedge \( E' \in \mathcal{E} \) such that \( E \subset E' \).
\end{condition}
\begin{theorem}[Theorem 3 in \cite{wang2024graphs}]\label{thm:3}
Let \( \mathcal{H} = (V, \mathcal{E}) \) be a hypergraph that only satisfies Condition \ref{cond} in \textit{Theorem 1} with \( m = |\mathcal{E}| \). Denote by \( Acc(\mathcal{H}) \) the accuracy of the maximal clique algorithm for reconstructing \( \mathcal{H} \). Then,
\begin{equation}
    \min_{\mathcal{H}} p(\mathcal{H}) \leq 2^{-\left( \lceil m/2 \rceil -1 \right)} \ll 2^{-m}.
\end{equation}
\end{theorem}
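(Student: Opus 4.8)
The plan is to prove the bound by an \emph{indistinguishability} argument, exploiting that the maximal-clique algorithm is a deterministic function of the clique-expansion graph alone. Write $G(\mathcal{H})$ for the graph obtained by replacing every hyperedge of $\mathcal{H}$ with a clique on its vertices; the algorithm outputs exactly the maximal cliques of $G(\mathcal{H})$, and it reconstructs $\mathcal{H}$ correctly only when $\mathcal{H}$ itself coincides with that set of maximal cliques. First I would collect, for a given $\mathcal{H}$, the family $\mathcal{F}(\mathcal{H})$ of all hypergraphs satisfying Condition~\ref{cond} whose clique-expansion equals $G(\mathcal{H})$. Every member of $\mathcal{F}(\mathcal{H})$ presents the same input to the algorithm, so under the natural uniform posterior over $\mathcal{F}(\mathcal{H})$ any fixed output is correct for exactly one member, whence the expected accuracy obeys $p(\mathcal{H}) \le 1/|\mathcal{F}(\mathcal{H})|$. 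Consequently $\min_{\mathcal{H}} p(\mathcal{H}) \le 1/\max_{\mathcal{H}}|\mathcal{F}(\mathcal{H})|$, and it suffices to exhibit a single hypergraph with $m$ hyperedges for which $|\mathcal{F}(\mathcal{H})| \ge 2^{\lceil m/2\rceil - 1}$.

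To build such a hypergraph I would use \emph{swap gadgets}: small subgraphs placed on pairwise vertex-disjoint supports, each admitting two distinct antichain pre-images with identical clique expansion (for example a $K_4$-type block that can be explained either as one large hyperedge or as a balanced collection of smaller ones). Because the supports are disjoint, $G(\mathcal{H})$ is the disjoint union of the gadget graphs, so no maximal clique and no subset-containment can straddle two gadgets; both Condition~\ref{cond} and the family $\mathcal{F}$ therefore factor across gadgets. Toggling the gadgets independently yields $2^{(\text{number of gadgets})}$ members of $\mathcal{F}(\mathcal{H})$. Choosing the gadget so that a single toggle involves a bounded number of hyperedges — essentially two — produces on the order of $\lceil m/2\rceil$ independent binary choices out of the $m$ available hyperedges, with one choice lost to a boundary/parity effect, giving $|\mathcal{F}(\mathcal{H})| \ge 2^{\lceil m/2\rceil - 1}$ and hence $\min_{\mathcal{H}} p(\mathcal{H}) \le 2^{-(\lceil m/2\rceil - 1)}$.

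The main obstacle is the gadget bookkeeping. I must exhibit a concrete gadget whose two states (i) each satisfy Condition~\ref{cond} internally \emph{and} after all gadgets are glued together, creating neither a spurious containment nor a spurious clique, and (ii) are genuinely indistinguishable, i.e.\ leave $G(\mathcal{H})$ unchanged when toggled. The disjoint-support device reduces (i)--(ii) to a finite check on one gadget, but the design is delicate: requiring equal clique-expansions while forbidding any new edge severely constrains which decompositions are admissible, and this is precisely where the minimum-hyperedge-size convention of the reconstruction problem enters, determining whether the ambiguity is present at all. The remaining work is an elementary count matching the exponent $\lceil m/2\rceil - 1$ — tracking how many hyperedges each toggle consumes so that the family attains the prescribed size while the base hypergraph has exactly $m = |\mathcal{E}|$ hyperedges. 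Finally, since $2^{-(\lceil m/2\rceil - 1)} \ge 2^{-m}$, I would read the trailing comparison in the statement qualitatively: the substantive content is that the maximal-clique reconstruction accuracy is exponentially small in $m$, so without the temporal-snapshot information underlying \cref{thm:main} the algorithm fails with overwhelming probability.
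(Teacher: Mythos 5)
The paper does not actually prove this statement: it is imported verbatim as Theorem~3 of \cite{wang2024graphs} and used as a black box in the derivation of Theorem~\ref{thm:main}, so there is no in-paper proof to compare yours against. That said, your indistinguishability argument has the right shape for results of this kind and very likely mirrors the source: the maximal-clique algorithm is a deterministic function of the clique expansion $G(\mathcal{H})$ alone, so its output can be exactly correct for at most one member of the confusion family $\mathcal{F}(\mathcal{H})$ of antichain pre-images of $G(\mathcal{H})$, and the substance of the theorem is that $|\mathcal{F}(\mathcal{H})|$ can be made to grow like $2^{\lceil m/2\rceil-1}$. You also correctly flag that the trailing comparison $2^{-(\lceil m/2\rceil -1)} \ll 2^{-m}$ is backwards as transcribed here (the left-hand quantity is the \emph{larger} of the two); that is a defect of the statement as it appears in this paper, not of your argument.

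Two points in your proposal would need to be nailed down before it counts as a proof. First, the step $p(\mathcal{H})\le 1/|\mathcal{F}(\mathcal{H})|$ silently equates ``accuracy'' with ``probability of exact reconstruction under a uniform posterior over the confusion family''; if $Acc(\mathcal{H})$ awards partial credit per hyperedge (as reconstruction accuracies often do), a single deterministic output could score well against every member of $\mathcal{F}(\mathcal{H})$ simultaneously and the counting bound collapses. The statement as quoted never defines $p(\mathcal{H})$ or its relation to $Acc(\mathcal{H})$, so this reduction must be imported from \cite{wang2024graphs} rather than asserted. Second, your swap gadget must toggle between two states with the \emph{same} number of hyperedges, otherwise $\mathcal{F}(\mathcal{H})$ mixes hypergraphs with different $m$ and the quantifier $\min_{\mathcal{H}}$ over hypergraphs with $m=|\mathcal{E}|$ no longer applies; the most obvious gadget (a triangle read either as one $3$-edge or as three $2$-edges) fails this count-preservation, so the ``finite check on one gadget'' you defer is precisely where the remaining work lies. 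Neither issue is fatal --- both are resolvable with the source's definitions --- but as written the proposal is a plan rather than a complete proof.
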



Now we are ready to prove the Theorem~\ref{thm:main}.

\begin{theorem-repeat}{thm:main}
Let \( P \) be an exponential distribution with rate parameter \( \lambda \), and let \( Q \) be the probility distribution of $\Lambda_t$. 
Consider a hypergraph \( \mathcal{H} \) with \( m = |\mathcal{E}| \) hyperedges. Denote by \( Acc(\mathcal{H}) \) the accuracy of the maximal clique algorithm for reconstructing \( \mathcal{H} \). Then, the expected accuracy satisfies:
\begin{equation}
\begin{aligned}
    \min_{\mathcal{H}} \mathbb{E}_{\Lambda_t \sim Q} \big[ Acc(\mathcal{H}) \big] 
    \leq  D_{\text{KL}}(Q \| P) -  \textit{Const} \cdot \frac{n(n-1)}{2} \cdot (1 - e^{-\lambda t})^{2K},
\end{aligned}
\end{equation}
where we define the constant
$\textit{Const} = \mathbb{E}_{\Lambda_t \sim P} \sum_{e=1}^{K} \sum_{f=1}^{K} \prod_{i \in C_e, j \in C_f} \Lambda_0[C_e, C_f]$
\end{theorem-repeat}

\begin{proof}
The proof follows from the bound on the accuracy of the maximal clique algorithm, combined with the KL-divergence term and the expectation over the exponential distribution. The term \( (1 - e^{-\lambda t}) \) captures the time-dependent growth in hyperedge formation, which asymptotically converges as \( t \to \infty \).
\begin{equation}
    \begin{aligned}
        \min_{\mathcal{H}} \mathbb{E}_{\Lambda_t \sim Q} \big[ Acc(\mathcal{H}) \big] 
    &\leq \mathbb{E}_{\Lambda_t \sim Q} \left[ 2^{-\left( \lceil m/2 \rceil -1 \right)} \right] 
    \ll \mathbb{E}_{\Lambda_t \sim Q} \left[ 2^{-m} \right] \\
    & \leq D_{\text{KL}}(Q \| P) -  \mathbb{E}_{\Lambda_t \sim P} \left[ m \cdot\ln \left(\frac{e}{2} \right)\right]\\
    &=  D_{\text{KL}}(Q \| P) -  \mathbb{E}_{\Lambda_t \sim P} \sum_{e=1}^{K} \sum_{f=1}^{K} \prod_{i \in C_e, j \in C_f} \Lambda_t[C_e, C_f] \cdot \frac{n(n-1)}{2}\\
        &=  D_{\text{KL}}(Q \| P) -  \textit{Const} \cdot \frac{n(n-1)}{2} \cdot (1 - e^{-\lambda t})^{2K}
    \end{aligned}
\end{equation}
where we define the constant
$\textit{Const} = \mathbb{E}_{\Lambda_t \sim P} \sum_{e=1}^{K} \sum_{f=1}^{K} \prod_{i \in C_e, j \in C_f} \Lambda_0[C_e, C_f]$.
\end{proof}

\section{Proof of Theorem~\ref{expressive}}\label{app:proof2}
First, we provide a definition of the expressive power of the temporal graph neural networks as~\citet{souza2022provably}.
For a single node $n$ in a temporal graph at time $t$, its message-passing procedure can be described as a temporal computation tree~(TCT). The TCT will take the node $n$ as its root, and for any node that is linked to $n$ before $t$ we add them and children nodes to the node $n$. Then for each of the child nodes, we are looking for the nodes linked to it before it is linked to the node $n$. 
By applying this procedure recursively, we can build the $TCT$ for node $n$ with arbitrary depth~(but can not surpass the size of the temporal graphs). 
For two nodes' $u$ and $v$, suppose their $TCT$ are $TCT(u)$ and $TCT(v)$ which are not isomorphic. 
If a TGNN can produce different embeddings for $u$ and $v$, we say it can distinguish $TCT(u)$ and $TCT(v)$, otherwise, the TGNN can not distinguish the two TCTs.
Next, we formally compare the expressive power of the temporal graph neural networks.
\begin{definition}[Expressiveness of TGNN]
    Consider two temporal graph neural networks $TGNN_1$ and $TGNN_2$, we denote the set of all TCT pairs they can distinguish as $\mathcal{T}_1$ and $\mathcal{T}_2$, respectively. Then we say $TGNN_1$ is strictly more powerful than $TGNN_2$ if and only if $\mathcal{T}_1 \subset \mathcal{T}_2$.
\end{definition}

Now we are ready to prove the Theorem~\ref{expressive} by showing $\mathcal{T}_{MP-TGN}\subset\mathcal{T}_{HTGN}$.

We will first prove that all the node pairs that MP-TGN can distinguish can also be distinguished by HTGN. This can be divided into two cases.

\textit{Case 1:} There are no higher-order structures in the TCTs. In this situation, HTGN will be equivalent to the pairwise message-passing temporal graph neural networks. Hence, they will produce the same embeddings.

\textit{Case 2:} There are higher-order structures in the TCTs. For every link $e$ used by the MP-TGN by to compute the embedding of the node $n$'s embedding, HTGN must also have used it to compute the node $n$'s embedding. This is because the node $n$'s embedding is calculated based on the hyperedge embeddings in $n$'s neighborhood. Any link $e$ that is computed by MP-TGN must have been used to compute one of the hyperedge features with injective aggregation function. Therefore, $e$'s information is also used to compute $n$'s embedding by HTCN. Hence, if two TCTs can be distinguished by MP-TGN, they can also be distinguished by HTGN.

Now we have shown that $\mathcal{T}_{MP-TGN}\subseteq\mathcal{T}_{HTGN}$. Next, we will show that HTGN can distinguish TCTs of two nodes which MP-TGN can not. Consider a simple example shown in Figure~\ref{fig:expressive}:

\begin{figure}[ht]
\vskip 0.2in
\begin{center}
\centerline{\includegraphics[width=0.8\columnwidth]{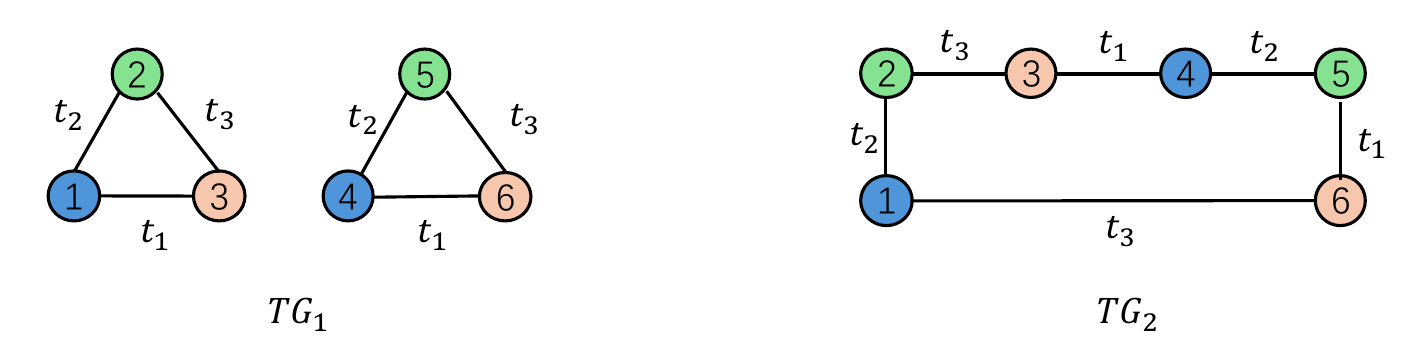}}
\caption{Two temporal graphs which HTGN can distinguish but MP-TGN can not. The colors represent node features.}
\label{fig:expressive}
\end{center}
\vskip -0.2in
\end{figure}

For MP-TGN, the embeddings of the corresponding nodes in the two figures are always the same~\cite{souza2022provably}. Hence, MP-TGN can not distinguish these two temporal graphs.  However, from the perspective of higher-order structures, there are two hyperedges of size 3 in $TG_1$. but none in $TG_2$. The difference can be captured by HTGN. Hence, the HTGN can distinguish the two temporal graphs while MP-TGN can not. We have $\mathcal{T}_{MP-TGN}\subset\mathcal{T}_{HTGN}$ and Theorem~\ref{expressive} is proved.
\section{Dataset Details}\label{app:dataset}

Table~\ref{tab:stats} presents the detailed statistics of datasets we used in our experiments, including the dataset's domain, type, and sizes. It can be observed that the datasets we use span a large range of domains and sizes. Hence, the datasets could serve as a solid foundation for the model evaluation.

\begin{table}[ht]
\caption{Dataset statistics.}
\label{tab:stats}
  \resizebox{\linewidth}{!}{%
  \begin{tabular}{l | l l l l l l l}
  \toprule
  \toprule
  Dataset & Domain& Type & \# Nodes & \# Edges
  & \# Unique Edges & \# Steps  \\ 
  \midrule

  Reddit & Social& Heterogeneous & 10,984 & 672,447 & 78,516 & 669,065  \\ 
  MOOC & Interaction & Heterogeneous & 7,144 & 411,749 & 178,443 & 345,600  \\ 
  LastFM & Interaction & Heterogeneous & 1,980 & 1,293,103 & 154,993 & 1,283,614  \\ 
  Enron & Social& Homogeneous & 184 & 125,235 & 3,125 & 22,632  \\ 
  SocialEvo & Proximity & Homogeneous & 74 & 2,099,519 & 4,486 & 565,932  \\ 
  UCI & Social & Homogeneous & 1,899 &59,835 & 20,296 & 58,911  \\ 
  tgbl-wiki & Social & Heterogeneous& 9,227 & 157,474 & 18,257 & 152,757  \\ 
tgbl-review & Social& Heterogeneous & 352,637 & 4,873,540 & 4,729,124 & 6,865  \\ 
tgbl-coin & Social& Homogeneous & 638,486 & 22,809,486 & 3,429,240 & 1,295,720  \\ 
tgbl-comment & Social& Homogeneous & 184 & 125,235 & 30,146,596 & 22,632  \\ 
tgbl-flight & Social& Homogeneous & 18143 & 67,169,570 & 3,125 & 1,385  \\ 

  \bottomrule
  \bottomrule
  \end{tabular}
  }

\end{table}
\section{Time Encoding}\label{app:time encoding}

Here we give the detailed method to compute the time-encoding function. We follow the previous method~\cite{xu2020inductive} to project the one-dimensional real-valued time to a d-dimensional vector. Specifically, the time encoding function $\phi$ is given by:
\begin{equation}
    \phi(\Delta t) = [\cos(w_1\Delta t+b_1), ..., \cos(w_d\Delta t+b_2)]
\end{equation}
where $w_i$ and $b_i$ are learnable scalar parameters.
\section{Implementation Details}

In this section, we introduction the implementation details of HTGN. We set the initial learning rate to be $10^{-4}$ and apply Adam~\cite{kingma2014adam} to optimize the rate. For the hyperedge aggregation function, we set $\alpha=2$ and $\beta=10^{-4}$. 
The number of hyperedge message passing layers is 2.
We set the batch size $b$ to be 200.
We keep all the edge feature dimension to be 175.
For homogeneous graphs, the maximal number of neighbors for each node is 20.
For structure updated in heterogeneous bipartite graphs, we allow each edge to keep up to 15 distinct nodes. 
\section{The Effect of Reception Field}
Here we study how the difference hops of neighborhood information affect the performance.  We conduct an ablation study on the effects of different hops of neighborhood information when computing the node embeddings on HTGN's performance. The results are shown in Figure~\ref{fig:ablation1}. We notice that increasing the number of hops yields performance improvements, but these gains diminish with further increases. Consequently, to balance performance and efficiency, we typically use 2-hop neighborhood information for node embeddings.

\begin{figure}[h]
\begin{center}
\centerline{\includegraphics[width=\columnwidth]{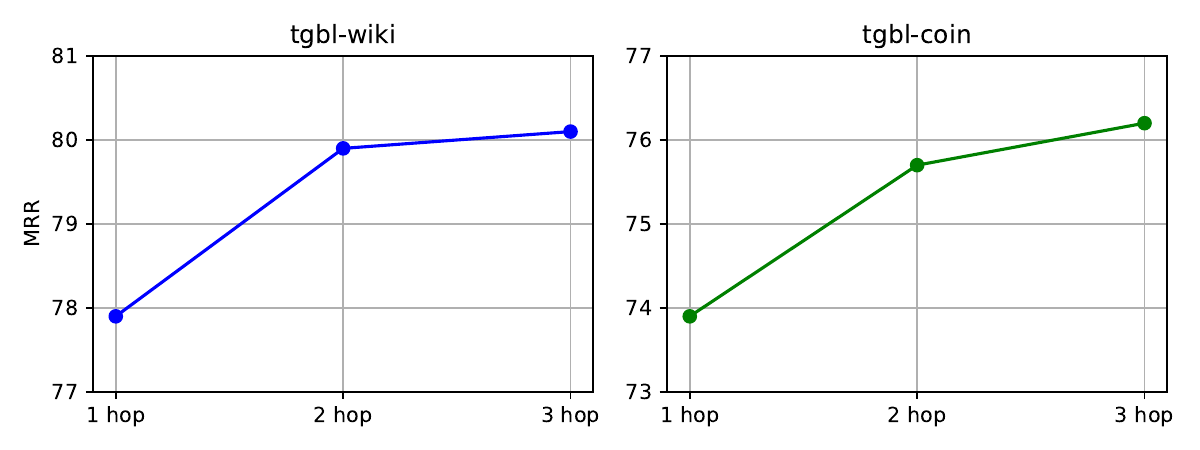}}
\vskip -0.2in
\caption{The effects of model reception field. }
\label{fig:ablation1}
\end{center}
\vskip -0.2in
\end{figure}
\section{An Example of the Formation of False Hyperedges}\label{app:case}
Here we provide an example to illustrate the formation of maximal cliques which are not hyperedges. 
As shown in Figure~\ref{fig:case}, there are three real hyperedges~(marked by different colors) in which nodes interact collectively within a short time slot. 
However, after the interactions, a new maximal clique $\{1,3,8\}$ has been formed. It does not belong to any other maximal cliques, and we can not distinguish it from the real hyperedges if the interaction time information is not given.

\begin{figure}[h]
\begin{center}
\centerline{\includegraphics[width=0.6\columnwidth]{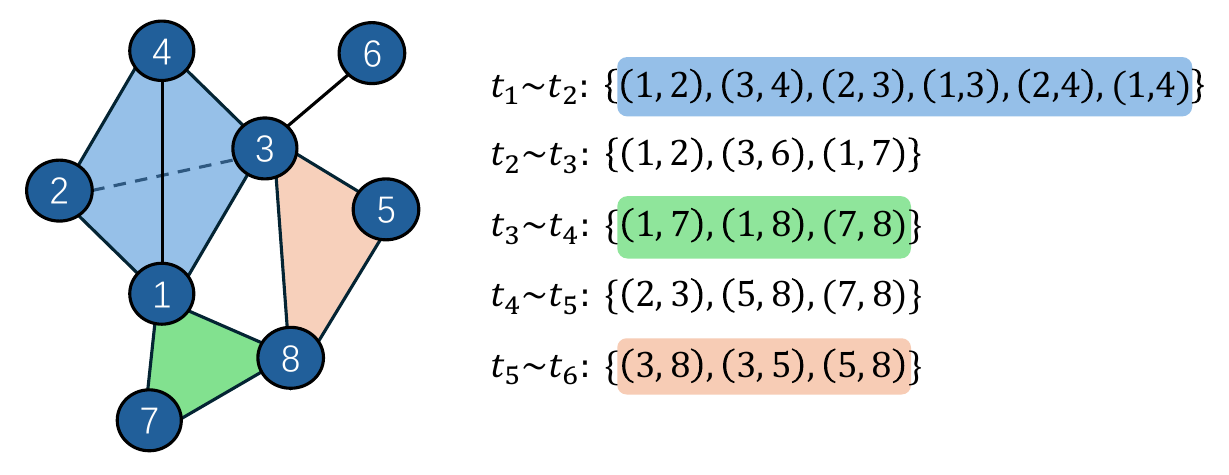}}
\vskip -0.2in
\caption{An example temporal graph. }
\label{fig:case}
\end{center}
\vskip -0.2in
\end{figure}

\newpage
\section{The Hyperedge Algorithm for Heterogeneous Bipartite Graphs}\label{app:bipartite}
In Algorithm~\ref{alg:bipartite}, we show how the hyperedges in heterogeneous bipartite graphs are constructed and updated step by step.

\begin{algorithm}[h]
   \caption{Hyperedge construction for heterogeneous bipartite graphs}
   \label{alg:bipartite}
\begin{algorithmic}[1]

   \STATE {\bfseries Input:} A event-stream source will generate a link $(u,v)$ at every time step $t$. A bipartite graph with two partitions of the nodes $\mathcal{N}_1$ and $\mathcal{N}_2$, where we suppose $|\mathcal{N}_1|>|\mathcal{N}_2|$
   \STATE {\bfseries Initialize:} A set $\mathcal{H}$ of hyperedges, in which every hyperedge corresponds to a node in $\mathcal{N}_2$. For instance, we denote the hyperedge corresponding to node $n$ as $E[n]$.

   \REPEAT
   \STATE Receive a new link $(u, v)$. Suppose $v \in \mathcal{N}_2$ and $u \in \mathcal{N}_1$.
   \STATE $E[v]$.add($u$)
   \IF{The size of $E[v]$ is greater than $b$}
   \STATE Find the node in $|E(v)|$ which has largest time gap since its last interaction with $v$. Denote the node as $o$.
      \IF{The time gap is greater than $t'$}
   \STATE $|E(v)|$.delete($o$)
   \ENDIF
   \ENDIF

   \UNTIL{No new link is generated}
\end{algorithmic}
\end{algorithm}





\end{document}